\newcommand\mathTable{\fontsize{8}{9.6}\selectfont}
\newcommand{\PreserveBackslash}[1]{\let\temp=\\#1\let\\=\temp}
\newcolumntype{C}[1]{>{\PreserveBackslash\centering}p{#1}}
\newtheorem{myTheorem}{Theorem}
\newtheorem{theorem-rst}[myTheorem]{Theorem}
\newtheorem{lemma-rst}[theorem]{Lemma}
\newtheorem{proposition-rst}[myTheorem]{Proposition}
\newtheorem{assumption-rst}[theorem]{Assumption}
\newtheorem{claim-rst}[theorem]{Claim}
\newtheorem{corollary-rst}[theorem]{Corollary}
\DeclarePairedDelimiter\br{(}{)}
\DeclarePairedDelimiter\brs{[}{]}
\DeclarePairedDelimiter\brc{\{}{\}}
\DeclarePairedDelimiter\abs{\lvert}{\rvert}
\DeclarePairedDelimiter\norm{\lVert}{\rVert}
\DeclarePairedDelimiter\floor{\lfloor}{\rfloor}
\newcommand{\E}{\mathbb{E}}
\newcommand{\R}{\mathbb{R}}
\newcommand{\Scal}{\mathcal{S}} 
\newcommand{\Ocal}{\mathcal{O}} 
\newcommand{\action}{S} 
\newcommand{\actionset}{\Scal} 
\newcommand{\meanvec}{ {\boldsymbol{\mu}} } 
\newcommand{\meanval}{\mu} 
\newcommand{\rewardNoArgs}{r} 
\newcommand{\reward}[2]{\rewardNoArgs\br*{#1;#2}} 
\newcommand{\rewardvec}[1]{\rewardNoArgs\br*{#1}} 
\newcommand{\rdef}{\reward{\action}{\meanvec}} 
\newcommand{\xb}{x} 
\newcommand{\ub}{u} 
\newcommand{\ab}{a} 
\newcommand{\obs}[2][]{X^{(#2)}_{#1}}
\newcommand{\Nitems}{M}
\newcommand{\Nbatch}{K}
\newcommand{\Narms}{m}
\newcommand{\Ndis}{{\Nbatch_\commset}}
\newcommand{\commset}{I} 
\newcommand{\dr}[1]{\Delta_{#1}}
\newcommand{\onevec}{\mathbf{1}}
\newcommand{\kl}{D_\mathrm{KL}}
\newcommand{\klBin}{\mathrm{kl}}
\newcommand{\unu}{{\underline{\nu}}} 
\DeclareMathOperator*{\argmax}{\arg\max}
\newenvironment{proofsketch}
{
\par\noindent{\bfseries\upshape Proof Sketch\ }
}
{\jmlrQED}
\title[Tight Lower Bounds for Combinatorial Multi-Armed Bandits]{Tight Lower Bounds for Combinatorial Multi-Armed Bandits}
\begin{document}

\maketitle

\begin{abstract}
The Combinatorial Multi-Armed Bandit problem is a sequential decision-making problem in which an agent selects a set of arms at each round, observes feedback for each of these arms and aims to maximize a known reward function of the arms it chose. While previous work proved regret upper bounds in this setting for general reward functions, only a few works provided matching lower bounds, all for specific reward functions. In this work, we prove regret lower bounds for combinatorial bandits that hold under mild assumptions for all smooth reward functions. We derive both problem-dependent and problem-independent bounds and show that the recently proposed Gini-weighted smoothness parameter \citep{merlisM19} also determines the lower bounds for monotone reward functions. Notably, this implies that our lower bounds are tight up to log-factors.
\end{abstract}

\begin{keywords}%
  Combinatorial Multi-Armed Bandits, Lower Bounds, Gini-Weighted Smoothness
\end{keywords}

\section{Introduction}

Combinatorial Multi-Armed Bandits (CMABs) are a well-known extension of Multi-Armed Bandits (MABs) \citep{robbins1952some}, where instead of choosing a single arm at each round, the agent selects a set of arms. It then observes noisy feedback for each arm in this set (`semi-bandit feedback') and aims to maximize a known reward function of the selected arms and their parameters. More specifically, it aims to minimize its regret, which is the expected cumulative difference between the reward of the best action and the reward of the agent's actions. The applications of this framework are numerous and vary between reward functions; the most common one is the linear reward function \citep{kveton2015tight}, which can be applied for problems such as spectrum allocation, shortest paths, routing problems and more \citep{gai2012combinatorial}. Another common application is the Probabilistic Maximum Coverage (PMC) problem \citep{merlisM19}, which is closely related to problems such as influence maximization and ranked recommendations.

Due to its usefulness, many previous works analyze regret upper bounds for different variants of this setting. While some works focus on specific reward functions, others derive bounds that hold for general reward functions. In these cases, the bounds usually depend on some measure of smoothness of the reward, for example, its global Lipschitz constant, or its Gini-weighted smoothness. The latter is a more refined smoothness criterion, recently suggested in \citep{merlisM19}, that takes into account the interaction between the local gradients of the reward and concentration properties of the arms. On the other hand, there are almost no works on matching lower bounds; to the best of our knowledge, all existing lower bounds for CMABs were derived for specific reward functions -- either the linear one or the PMC problem. Notably, there is no characterization of lower bounds for general reward functions, and it is unclear whether existing upper bounds are tight.

The gain from general lower bounds is threefold: (i) When the bounds are loose, understanding which quantities affect the lower bounds allows devising tighter algorithms; (ii) When the bounds are tight, the instances on which the bounds were derived can help to determine under which additional assumptions the lower bounds do not hold. Such assumptions might allow us to derive improved upper bounds; (iii) When we can control some parameters of the problem, e.g., the number of arms in an action, their effect on the lower bound can help us tune them for each application. 

In this work, we derive problem-dependent (\textbf{Theorem \ref{theorem:dependent_lower_bound}}) and problem-independent (\textbf{Theorem \ref{theorem:independent_lower_bound}}) lower bounds that hold for general reward functions under mild assumptions. The problem-dependent bound shows that for any `good' bandit strategy, there exists a CMAB instance such that the asymptotic regret must be larger than a certain logarithmic rate. The problem-independent bound shows that for any strategy and any large enough horizon $T$, there exists a horizon-dependent instance with a $\sqrt{T}$ regret. To derive these bounds, we define a family of action sets for CMAB problems, which we call \emph{$\commset$-disjoint}. There, a subset of arms $\commset$ appear in all actions and independent of other actions, while the rest of the arms appear in a single action. We then prove that for $\commset$-disjoint problems, both bounds depend on a new modified Gini-smoothness measure; specifically, they reproduce existing lower bounds for both the linear reward function and the PMC problem. If the reward function is also monotone, as in most practical applications, we derive an additional bound that depends on the Gini-smoothness of the reward and matches the upper bound of \citep{merlisM19} up to logarithmic factors (\textbf{Proposition \ref{prop:smoothness-relation-euclid}}). Thus, our results demonstrate that without any additional assumptions, the bounds are tight for almost any reward function.

\section{Related Work}
The general framework of combinatorial bandits with semi-bandit feedback was first presented in \citep{chen2013combinatorial}. Since then, it has had many extensions, e.g., for the case of probabilistically-triggered arms, where the set of arms in an action might be random \citep{chen2016combinatorialA,wang2017improving}, and for reward functions that depend on the arm distribution \citep{chen2016combinatorialB}. Moreover, many previous works focus on specific instances of this problem, e.g., linear reward functions \citep{kveton2015tight,combes2015combinatorial,degenne2016combinatorial}, cascading bandits \citep{kveton2015cascading,kveton2015combinatorial} and more. Recently, \citet{merlisM19} presented BC-UCB, a Bernstein-based UCB algorithm with regret bounds that depend on a new smoothness measure, which they call the \emph{Gini-weighted smoothness}. Specifically, they show that by combining the reward nonlinearity with the local behavior of the confidence intervals, the dependency of previous regret bounds in the maximal action size can be removed. 
In this work, we show that for monotone reward functions, the Gini-smoothness also characterizes the lower bounds for CMAB problems and therefore prove that this upper bound is tight. In addition, while all previously stated papers assume that the reward function is monotone, a few papers also support non-monotone reward functions \citep{wang2018thompson,huyuk2019thompson}. We also present lowers bounds for this scenario.

Although there has been extensive work on regret upper bounds for CMABs, there are almost no results on lower bounds for this setting. \citet{kveton2015tight} derived lower bounds for the linear reward function with general arm distributions, and when arms are also independent, lower bounds can be found in \citep{degenne2016combinatorial,combes2015combinatorial}. Also, \citet{kveton2015cascading} derived lower bounds for cascading bandits and \citet{merlisM19} derived bounds for the PMC problem. Nevertheless, and to the best of our knowledge, there are no lower bounds for general reward functions. A comparison of our bounds to previous related bounds can be found in Table \ref{table:comparison}.

In contrast to the CMAB problem, the lower bounds for MABs are well characterized. In their seminal work, \citet{lai1985asymptotically} presented the first general problem-dependent lower bound for MABs, which was later extended by \citet{burnetas1996optimal}. In terms of problem-independent bounds, \citet{auer2002nonstochastic} derived an $\Omega(\sqrt{KT})$ lower bound for $K$-armed bandit problems with time horizon $T$, whose constants were later improved by \citet{cesa2006prediction}. Also, \citet{mannor2004sample} proved problem-independent lower bounds with both linear and logarithmic regimes. 
Recently, \citet{garivier2018explore} presented a general tool that allows deriving various lower bounds for MABs. We adapt this tool for the CMAB problem to derive our new regret bounds.

\setlength {\tabcolsep}{0.75pt}
\begin{table*}
\centering
\begin{threeparttable}
\caption{Upper (UB) and lower (LB) bounds of different CMAB problems for arbitrary action sets. Dep./Ind. are problem-dependent and problem-independent bounds, and the notations follow Section \ref{section:prelim}. $\gamma_\infty$ is the global Lipschitz constant of a reward function, and for the Gini-smoothness $\gamma_g$, it holds that $\gamma_g\!\le\!\sqrt{\Nbatch}\gamma_\infty$ \citep{merlisM19}. $\dr{\min}$ is the minimal gap.}
\label{table:comparison}
{\footnotesize 
\begin{tabular}{|c|c|c|c|>{\columncolor[gray]{0.9}}c| >{\columncolor[gray]{0.9}}c|}\hline
 \textbf{CMAB problem} &  \textbf{Type} & \textbf{Previous UB} & \textbf{Previous LB} & \textbf{Theorem \ref{theorem:dependent_lower_bound} or \ref{theorem:independent_lower_bound}} & \textbf{Proposition \ref{prop:smoothness-relation-euclid}} \\ \hline 
 
  \multirow{2}{3.85cm}[-0.225cm]{\centering\textit{General reward functions}} &
  Dep.  & 
  $\Ocal\br*{\frac{\gamma_\infty^2\Narms\Nbatch\ln T}{\dr{\min}}}^\dag$&
  None&
  $\Omega\br*{\max_{\meanvec,
  \commset}\frac{\tilde{\gamma}_g^2(\meanvec;\commset)\Narms\ln T}{\dr{\min}\Ndis}}$&
  NA\\ 
  \hhline{~-----}
   & 
   Ind. & 
   None & 
   None &
   $\Omega\br*{\max_{\meanvec,
  \commset}\sqrt{\frac{\tilde{\gamma}_g^2(\meanvec;\commset)\Narms T}{\Ndis}}}$&
  NA\\ 
  \hline
  \multirow{2}{3.85cm}[-0.25cm]{\centering\textit{Monotone reward functions}}  &
  Dep.  &
  $\Ocal\br*{\frac{\gamma_g^2\Narms\ln^2\Nbatch\ln T}{\dr{\min}}}^\ddag$&
  None &
  $\Omega\br*{\max_{\meanvec,
  \commset}\frac{\tilde{\gamma}_g^2(\meanvec;\commset)\Narms\ln T}{\dr{\min}\Ndis}}$&
  $\tilde\Omega\br*{\frac{\gamma_g^2\Narms\ln T}{\dr{\min}}}$ \\ 
  \hhline{~-----}
   &  
   Ind. & 
   $\Ocal\br*{\gamma_g\ln\Nbatch\sqrt{\Narms T}}^\ddag$&
   None &
   $\Omega\br*{\max_{\meanvec,
  \commset}\sqrt{\frac{\tilde{\gamma}_g^2(\meanvec;\commset)\Narms T}{\Ndis}}}$&
  $\tilde\Omega\br*{\gamma_g\sqrt{\Narms T}}$\\ 
  \hline
  \multirow{2}{3.85cm}[0.1cm]{\centering\textit{Linear reward function} \vspace{-0.1cm}{\footnotesize $$\rdef=\sum_{i\in\action}\meanval_i$$} }  &
  Dep.  &
  $\Ocal\br*{\frac{\Narms\Nbatch\ln T}{\dr{\min}}}^\S$ &
  $\Omega\br*{\frac{\Narms\Nbatch\ln T}{\dr{\min}}}^\S$ &
  $\Omega\br*{\frac{\Narms\Nbatch\ln T}{\dr{\min}}}$ & $\Omega\br*{\frac{\Narms\Nbatch\ln T}{\br*{\ln\Nbatch}\dr{\min}}}$ \\ 
  \hhline{~-----}
   &  
   Ind. & 
   $\Ocal\br*{\sqrt{\Narms\Nbatch T}}^\S$  &
   $\Omega\br*{\sqrt{\Narms\Nbatch T}}^\S$ &
   $\Omega\br*{\sqrt{\Narms\Nbatch T}}$ & $\Omega\br*{\sqrt{\frac{\Narms\Nbatch T}{\ln\Nbatch}}}$\\ 
  \hline
  \multirow{2}{3.85cm}[0.1cm]{\centering\textit{PMC problem} \vspace{-0.18cm}{\mathTable $$\rdef\!=\!\sum_{i=1}^\Nitems\!\br*{\!\!1\!-\!\prod_{j\in\action}\!(1-\meanval_{ij})\!\!}$$} }  &
  Dep.  &
  $\Ocal\br*{\frac{\Narms\Nitems^2\ln^2\Nbatch\ln T}{\dr{\min}}}^\ddag$ &
  $\Omega\br*{\frac{\Narms\Nitems^2\ln T}{\dr{\min}}}^\ddag$ &
  $\Omega\br*{\frac{\Narms\Nitems^2\ln T}{\dr{\min}}}$ & $\Omega\br*{\frac{\Narms\Nitems^2\ln T}{\br*{\ln\Nbatch}^2\dr{\min}}}$ \\ [0.2cm]
  \hhline{~-----}
   &  
   Ind. & 
   $\Ocal\br*{\Nitems\ln\Nbatch\sqrt{\Narms T}}^\ddag$  &
   $\Omega\br*{\Nitems\sqrt{\Narms T}}^\ddag$ &
   $\Omega\br*{\Nitems\sqrt{\Narms T}}$ & $\Omega\br*{\frac{\Nitems\sqrt{\Narms T}}{\ln\Nbatch}}$\\ [0.2cm]
  \hline
\end{tabular}
}\vspace{-0.075cm}
\begin{tablenotes}
      \small
      \item \hspace{-0.65cm} $^\dag$\citep{wang2018thompson}, requires independent arms.
      \,\,  $^\ddag$\citep{merlisM19}
      \,\,  $^\S$\citep{kveton2015tight}
\end{tablenotes}
\vspace{-0.3cm}
\end{threeparttable}
\end{table*}
\setlength{\tabcolsep}{6pt}

\section{Preliminaries and Notations}
\label{section:prelim}

We start with some notations. Let  $\brs{n}=\brc*{1,\dots,n}$, and for any vector $\xb\!\in\!\R^n$ and set $I\!\subset\! \brs*{n}$, denote by $\xb_I$, a sub-vector of $\xb$ that contains only elements from $I$. We denote the Kullback-Leibler (KL) divergence between two distributions $\unu,\unu'$ by $\kl(\unu,\unu')$, and the KL divergence between two Bernoulli random variables with expectations $p,q$ by $\klBin(p,q)$. For any vector $\xb\!\in\!\R^n$, let $\xb^s$ be a permutation such that $x_1^s\!\le\! \dots\!\le\! x_n^s$, and define the increasing permutation of vector $\xb\!\in\!\R^n$ w.r.t. a set $I$ as $p^{\xb,\commset}\!=\!\brs*{\xb_{I^c}^s,\xb_I}\!\in\!\R^n$; namely, the beginning of the vector $p^{\xb,\commset}$ contains a sorted permutation of the elements of $\xb$ in $I^c=\brs*{n}/I$, and its end contains the elements of $\xb$ in $I$. Finally, for any set $I$ of bounded size $\abs*{I}\le\Nbatch$, we denote by $\Ndis\!=\!\Nbatch-\abs{I}$ the size of the complementary set w.r.t. $\Nbatch$.

We work under the combinatorial multi-armed bandit setting with semi-bandit feedback. Denote the number of arms (`base arms') by $\Narms$, and let $\actionset\subset 2^{\brs*{\Narms}}$ be the set of possible actions (`super arms'), that is, the set that contains all valid combinations of base arms that the agent can choose. The number of base arms in each action $\action\in\actionset$ is bounded by $\abs{\action}\le\Nbatch$, and w.l.o.g., assume that $\abs{\action}=\Nbatch$. At the beginning of each round $t$, the arms generate an observation vector $\obs{t}\!=\!\br*{\obs[1]{t},\dots,\obs[\Narms]{t}}\in\brs*{0,1}^\Narms$, sampled from a fixed distribution independently of other rounds. Then, the agent chooses an action $\action_t\in\actionset$ and observes feedback $\obs[\action]{t}\triangleq\brc*{\br*{i,\obs[i]{t}}, \forall i\in\action_t}$.
Denote the means of base arms by $\E\brs*{\obs{t}}\!=\!\meanvec\!=\!\br*{\meanval_1,\dots,\meanval_{\Narms}}$. The goal of the agent is to maximize a known reward function $\rdef$, without knowing $\meanvec$. Specifically, the agent aims to minimize its regret $R(T) = \sum_{t=1}^T\br*{\reward{\action^*}{\meanvec}-\reward{\action_t}{\meanvec}}\triangleq\sum_{t=1}^T\dr{\action_t}$, where $\action^*\in\argmax_{\action\in\actionset}\rdef$ is an optimal action\footnote{Previous work on regret upper bounds also allows approximate maximization of $r$. We focus on the best achievable performance, so we assume we can efficiently maximize $r$.} and $\dr{\action_t}=\reward{\action^*}{\meanvec}-\reward{\action_t}{\meanvec}$ is the suboptimality gap of $\action_t$.
To prove the lower bounds, we require a mild assumption on the reward function, which we call \emph{index invariance}:
\begin{definition}
\label{assum: differentiable reward}
A reward function $\rdef:\actionset\times\brs*{0,1}^\Narms\to\R$ is called differentiable if for any $\action\in\actionset$, it is differentiable in $\meanvec\in\brs*{0,1}^\Narms$. 
\end{definition}
\begin{definition}
\label{assum: reward function}
A differentiable reward function $\rdef:\actionset\times\brs*{0,1}^\Narms\to\R$ is called smooth index invariant if for any $\action\in\actionset$, it only depends on the arms in $\action$, i.e., $\rdef=\rewardvec{\meanvec_\action}$. 
\end{definition}

When the function is index invariant, and with a slight abuse of notations, we also write $\rewardvec{\meanvec}$, with $\meanvec\in\R^{\Nbatch}$, to represent the mean of arms $\meanvec_\action$ for $\abs{\action}=\Nbatch$. This assumption helps avoiding cases in which specific arms behave inherently different than other arms, such that the problem becomes much easier. For example, for the biased linear function $\rdef = \sum_{i\in \action} \br*{\meanval_i +\Narms i}$ and for any $\meanvec\in\brs*{0,1}^\Narms$, the optimal action is $\action^*=\argmax_{\action\in\actionset}\sum_{i\in\action} i$, regardless of the arm means; therefore, both the upper and lower bounds for this reward function trivially equal zero. In contrast, the lower bound for the linear function are nonzero (see Table \ref{table:comparison}); thus, without the index-invariance, the lower bounds cannot be characterized solely by the gradient of reward function w.r.t. $\meanvec$, in contrast to the existing upper bounds.
To the best of our knowledge, all practical applications for CMABs are index-invariant or can be written as a sum over an index-invariant function that is applied on different arms (e.g., as in Corollary \ref{corollary: sum index invariant dependent}). We also believe that our analysis will hold for reward functions that depend on the \emph{order} of arms inside an action. However, we leave this extension for future work. 
Besides this assumption, we later move our focus to monotone reward functions, which are defined as follows:
\begin{definition}
\label{def:monotone}
A differential reward function $\rdef:\actionset\times\brs*{0,1}^\Narms\to\R$ is called monotone if for any $\action\in\actionset$, any $\meanvec\in\brs*{0,1}^\Narms$ and any $i\in\brs*{\Narms}$, it holds that $\nabla_i \rdef\ge0$.
\end{definition}
We remark that in most previous work, the upper bounds only hold for monotone functions, which include most of the practical application, e.g., the linear and PMC problems. 
We end this part of the preliminaries with an important inequality that was derived for MABs and will enable us to derive our new bounds for CMABs. Let $\brs*{\Narms}$ be a set of arms, where each arm $a\in\brs*{\Narms}$ is characterized by a distribution $\nu_a$ over $\R^\Nbatch$, and denote $\unu=\brc*{\nu_a}_{a\in\brs*{\Narms}}$.\footnote{\citet{garivier2018explore} assume that $\nu_a$ are distributions over $\R$, but the exact same proof holds for distributions over $\R^\Nbatch$.} Assume that at each round, when playing $a_t$, a sample $Y_t$ is drawn independently at random from $\nu_{a_t}$. Let $\psi$ be a strategy that chooses an arm according to the history and internal i.i.d randomization $U_t\in\brs*{0,1}$. Namely, if $H_t = \br*{U_0,Y_1,U_1,\dots,U_t,Y_t}$, then  $a_{t+1}=\psi_t(H_t)$. Also, let $N_{\psi,a}(T)$ be the number of times an arm $a$ was played under strategy $\psi$ up to time $T$. Under these notations, the following holds:
\begin{lemma}[\citealt{garivier2018explore}]
\label{lemma:kl count bound}
 For all bandit problems $\unu,\unu'$, for all $\sigma(H_T)$-measurable random variables $Z$ with values in $[0,1]$, 
\begin{align} 
    \label{eq:kl count bound}
    \sum_{a=1}^\Narms \E_{\unu}\brs*{N_{\psi,a}(T)}\kl(\nu_a,\nu_a') \ge \klBin\br*{\E_\unu\brs*{Z},\E_{\unu'}\brs*{Z}}\, ,
\end{align}
where $\klBin(p,q)=p\ln\frac{p}{q}+(1-p)\ln\frac{1-p}{1-q}$.
\end{lemma}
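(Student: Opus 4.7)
The plan is to establish \eqref{eq:kl count bound} via the chain rule for Kullback--Leibler divergence applied to the joint law of the history $H_T$, followed by the data processing inequality. Denote by $P_\unu$ and $P_{\unu'}$ the probability measures on $H_T=(U_0,Y_1,U_1,\dots,U_T,Y_T)$ induced by the fixed strategy $\psi$ played against the two bandit problems $\unu$ and $\unu'$. Since $\psi$ is the same under both measures and the internal randomization $U_t$ does not depend on the environment, only the conditional laws of the samples $Y_t$ differ.

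First I would verify, via the tower property and the chain rule for KL, that
\begin{equation*}
\kl(P_\unu^{H_T},P_{\unu'}^{H_T})=\sum_{t=1}^T \E_\unu\!\left[\kl\bigl(P_\unu^{Y_t\mid H_{t-1},a_t},\,P_{\unu'}^{Y_t\mid H_{t-1},a_t}\bigr)\right].
\end{equation*}
Because $a_t=\psi_{t-1}(H_{t-1})$ is $H_{t-1}$-measurable and the sample $Y_t$ is drawn independently from $\nu_{a_t}$ (respectively $\nu_{a_t}'$), the inner conditional KL reduces to $\kl(\nu_{a_t},\nu_{a_t}')$. Writing the indicator decomposition $\mathds{1}\{a_t=a\}$ and summing first over $t$ and then over the arms would give
\begin{equation*}
\kl(P_\unu^{H_T},P_{\unu'}^{H_T})=\sum_{a=1}^{\Narms}\E_\unu\!\left[\sum_{t=1}^T \mathds{1}\{a_t=a\}\right]\kl(\nu_a,\nu_a')=\sum_{a=1}^{\Narms}\E_\unu[N_{\psi,a}(T)]\,\kl(\nu_a,\nu_a'),
\end{equation*}
which matches the left-hand side of \eqref{eq:kl count bound}.

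Next, I would invoke the data processing inequality to bound this joint KL from below by $\klBin(\E_\unu[Z],\E_{\unu'}[Z])$. Since $Z$ is $\sigma(H_T)$-measurable with values in $[0,1]$, one can introduce (on an enlarged probability space common to both measures) an auxiliary Bernoulli variable $B$ that equals $1$ with probability $Z$ conditional on $H_T$; the conditional law of $B$ given $H_T$ is identical under $\unu$ and $\unu'$, so $B$ is obtained from $H_T$ by a channel that does not depend on the environment. The data processing inequality then yields
\begin{equation*}
\kl\bigl(P_\unu^{H_T},P_{\unu'}^{H_T}\bigr)\ge \kl\bigl(\mathrm{Ber}(\E_\unu[Z]),\mathrm{Ber}(\E_{\unu'}[Z])\bigr)=\klBin(\E_\unu[Z],\E_{\unu'}[Z]),
\end{equation*}
and combining this with the chain-rule identity above proves the claim.

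The routine part is the chain rule bookkeeping. The main obstacle is being careful with the conditioning structure — in particular, ensuring that the presence of the auxiliary randomizers $U_t$ does not alter the factorization (they cancel because their law is the same under both environments) and that the data-processing step is correctly formulated for an arbitrary $[0,1]$-valued $Z$ rather than only for binary random variables. Once that is handled, the inequality follows with equality in the first step and a single application of data processing in the second.
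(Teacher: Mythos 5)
Your argument is correct and is essentially the proof given in the cited source \citep{garivier2018explore}: the chain rule for KL divergence yields the divergence decomposition $\sum_a \E_\unu[N_{\psi,a}(T)]\kl(\nu_a,\nu_a')$, and the data-processing inequality applied to the environment-independent channel $H_T\mapsto B$ with $\Pr(B=1\mid H_T)=Z$ gives the lower bound $\klBin(\E_\unu[Z],\E_{\unu'}[Z])$. The paper itself does not reprove this lemma but imports it, noting only that the identical argument goes through when the $\nu_a$ are distributions over $\R^\Nbatch$ rather than $\R$, which your proof also handles since nothing in it uses one-dimensionality of the samples.
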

In the combinatorial case we use similar notations and denote the action counts by $N_{\psi,\action}(T)$.

\subsection{Smoothness Measures}

\begin{figure*}
\centering
\subfigure[\footnotesize Hoeffding confidence bounds]{
\includegraphics[width=0.31\linewidth]{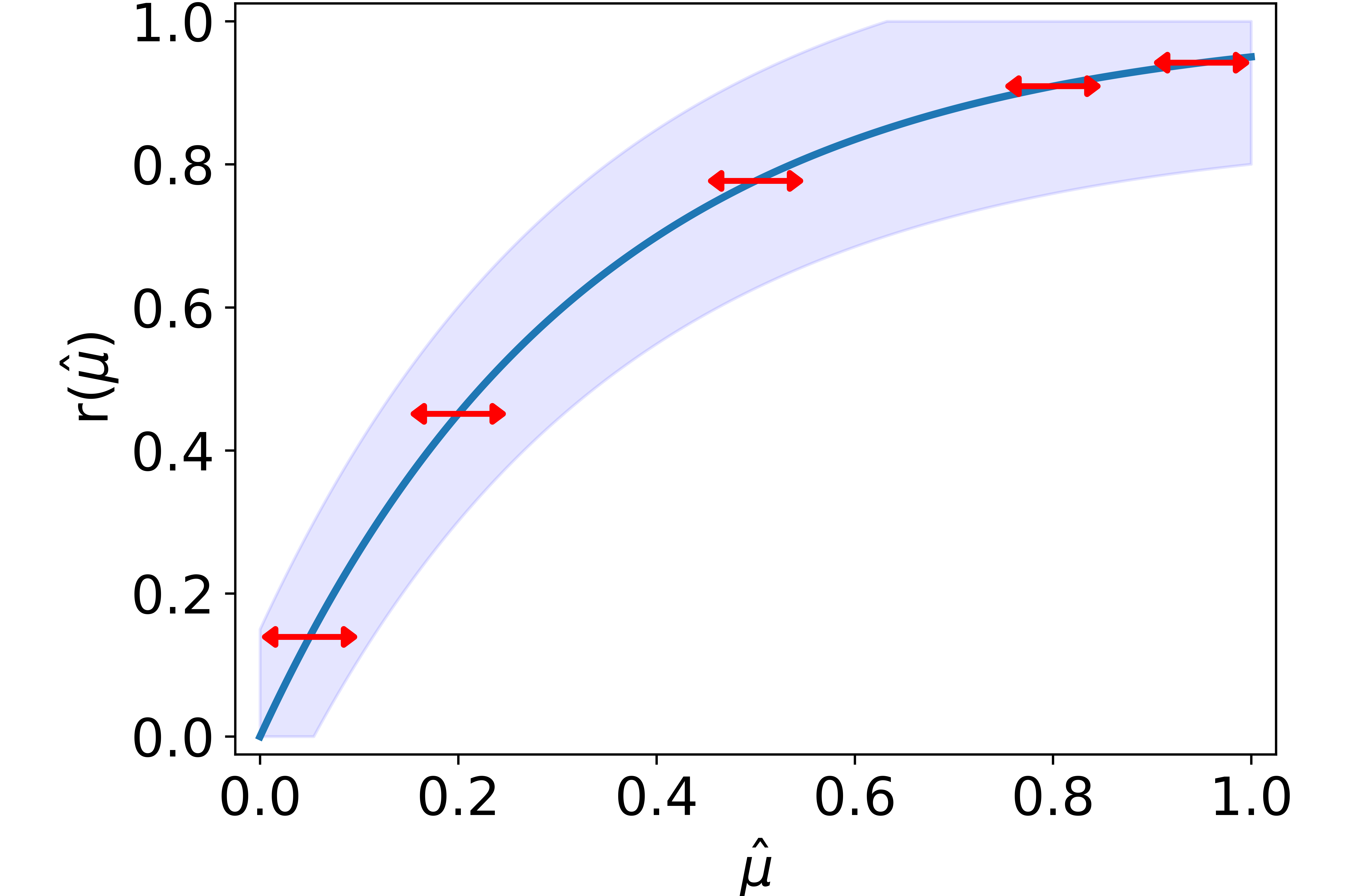} 
\label{subfig:hoeff}} 
\subfigure[\footnotesize Bernstein confidence bounds]{
\includegraphics[width=0.31\linewidth]{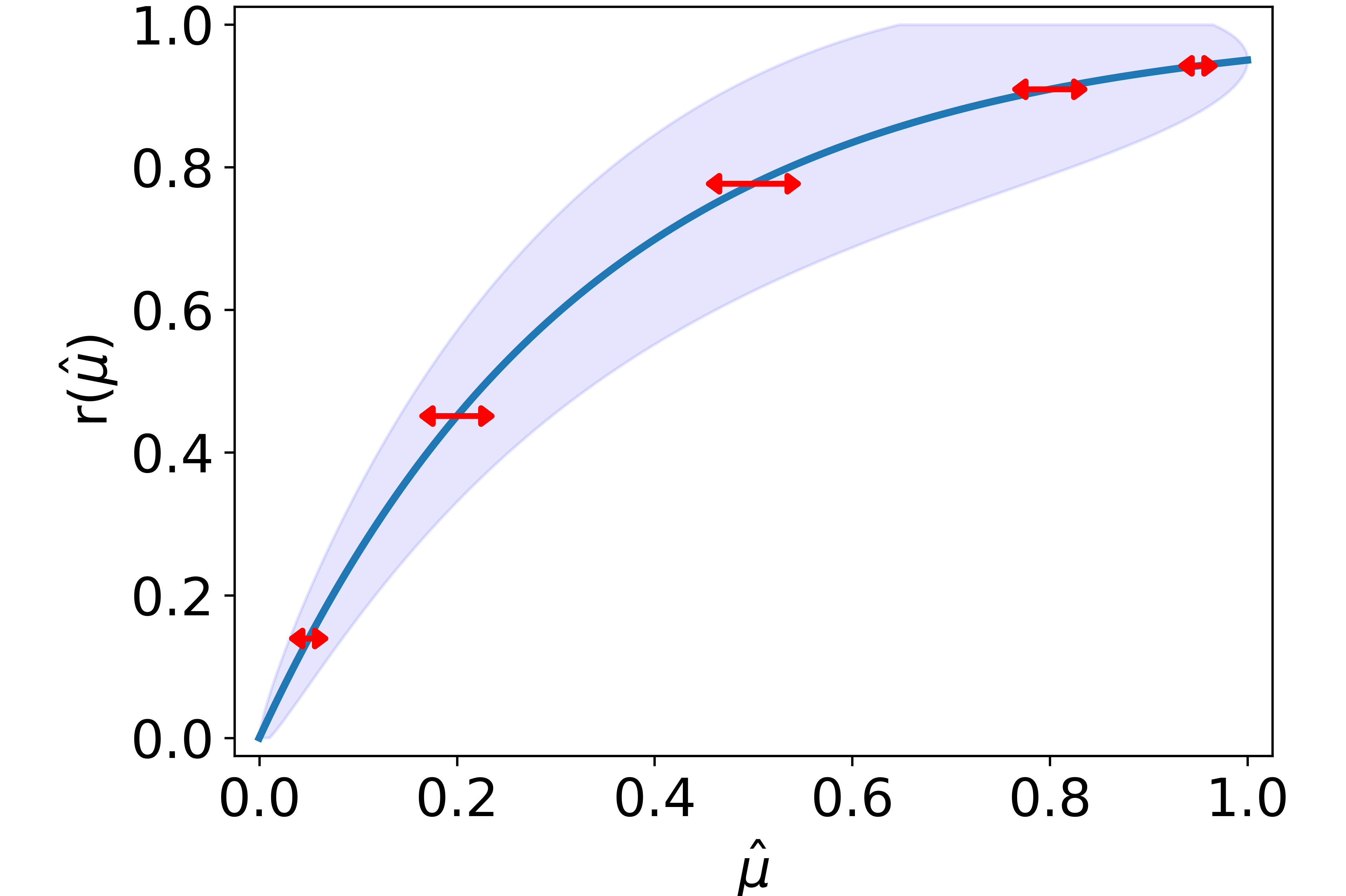} 
\label{subfig:bern-loose}}
\subfigure[\footnotesize  Bernstein confidence bounds]{
\includegraphics[width=0.31\linewidth]{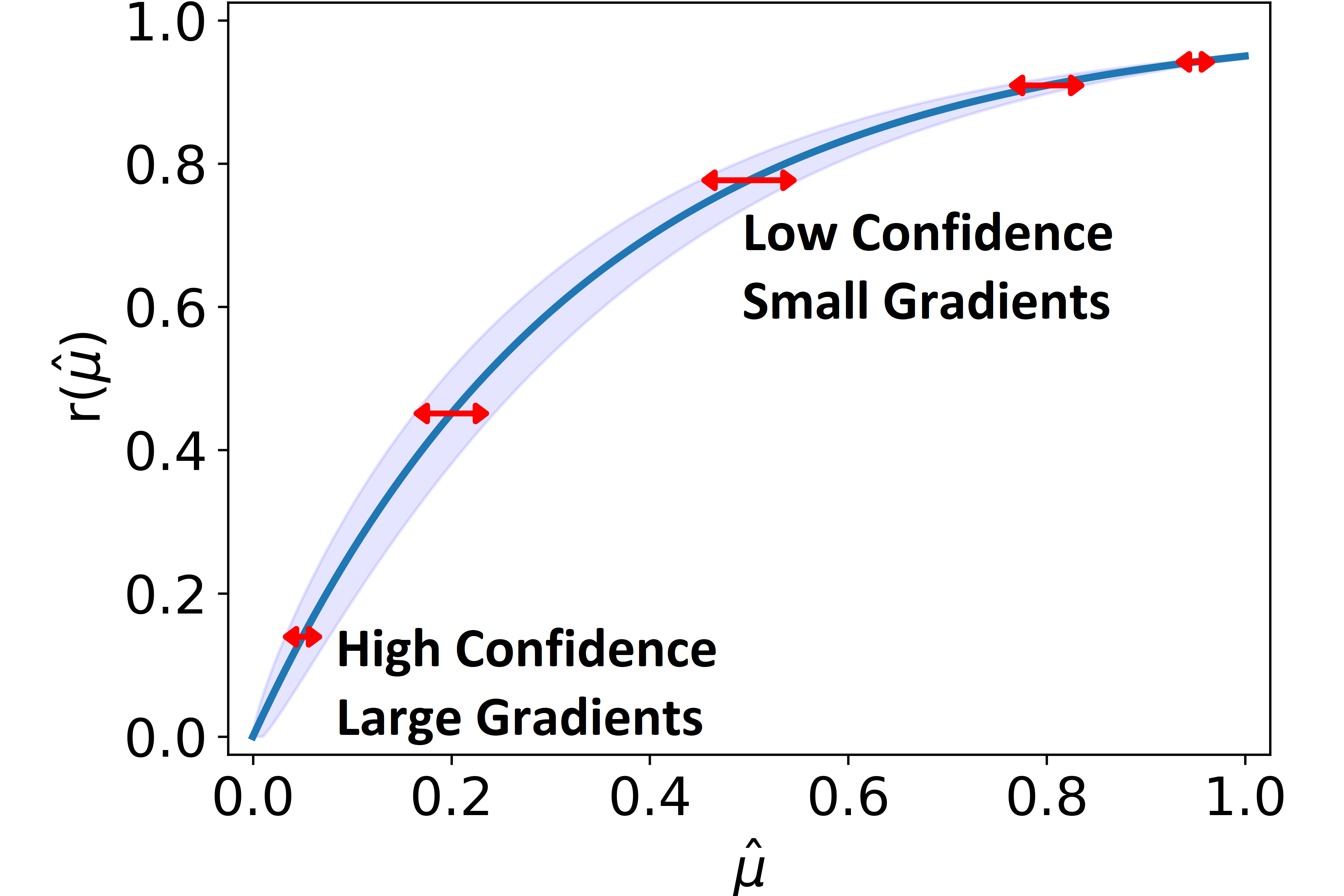}
\label{subfig:bern-tight}}
\caption{Red arrows: confidence intervals on the function parameter (x-axis confidence), due to either Hoeffding or Bernstein inequalities; the latter is tighter near the edges. Blue zone: the resulting confidence intervals on the reward function in the bold curve (y-axis confidence). In Figures \ref{subfig:hoeff},\ref{subfig:bern-loose}, these intervals are derived using the global Lipschitz constant of the reward $\gamma_\infty$, i.e., $CI\br*{\rewardvec{\hat\meanvec}}\!\lesssim\! \gamma_\infty \sum_i CI(\hat\meanval_i)$. In Figure \ref{subfig:bern-tight}, we present the real confidence interval on the reward, that is much tighter than the bound due to $\gamma_\infty$. This is since the bound is tight where the gradient is large, which is around the edge of the domain, but loose in other areas, where the gradient is small.
} 
\vspace{-0.2cm}
\label{figure:gini-smoothness}
\end{figure*}

We now present the smoothness measures for smooth index-invariant reward functions that govern our lower bounds. The measures are defined for arm parameters $\meanvec\!\in\!\R^\Nbatch$ and a set $\commset\!\subset\!\brs*{\Narms}$ as follows:
\begin{enumerate}
    \item L2 Gini-weighted smoothness \vspace{-0.25cm}
    {\small
    \begin{align}  
    \label{eq:smoothDef L2}
    \gamma_{g,2}^2(\meanvec;\commset) 
    &\triangleq\sum_{i=1}^\Ndis p^{\meanvec,\commset}_i(1-p^{\meanvec,\commset}_i)\nabla_i\rewardvec{p^{\meanvec,\commset}}^2 
    = \sum_{i\notin \commset} \meanval_i(1-\meanval_i)\nabla_i\rewardvec{\meanvec}^2
    \end{align}
    }\vspace{-0.5cm}
    \item L1 Gini-weighted smoothness\vspace{-0.25cm}
    {\small
    \begin{align}  
    \label{eq:smoothDef L1}
    \gamma_{g,1}^2(\meanvec;\commset) 
    &\triangleq\br*{\sum_{i=1}^{\Ndis}\sqrt{p^{\meanvec,\commset}_i(1-p^{\meanvec,\commset}_i)}\nabla_i\rewardvec{p^{\meanvec,\commset}}}^2
    = \br*{\sum_{i\notin \commset} \sqrt{\meanval_i(1-\meanval_i)}\nabla_i\rewardvec{\meanvec}}^2
    \end{align}
    }\vspace{-0.5cm}
    \item \emph{Modified} Gini-weighted smoothness\vspace{-0.25cm}
    {\small
    \begin{align}
    \label{eq:newSmooth}
       &\tilde{\gamma}_g^2(\meanvec;\commset) = \sum_{i=1}^\Ndis p^{\meanvec,\commset}_i(1-p^{\meanvec,\commset}_i)\nabla_i\rewardvec{p^{\meanvec,\commset}}^2 
       +2\sum_{i=1}^\Ndis \sum_{j=i+1}^\Ndis p^{\meanvec,\commset}_i(1-p^{\meanvec,\commset}_j)\nabla_i\rewardvec{p^{\meanvec,\commset}}\nabla_j\rewardvec{p^{\meanvec,\commset}} 
    \end{align}
    }\vspace{-0.5cm}
\end{enumerate}
Notice that the last equality in Equations \eqref{eq:smoothDef L2},\eqref{eq:smoothDef L1} is due to the index-invariance assumption. Specifically, the assumption implies that $\nabla_i\rewardvec{\meanvec}$ only depends on $\meanval_i$ and the set of values $\brc*{\meanval_i}_{i=1}^\Narms$. 
For smooth index-invariant reward functions, $\gamma_{g,2}^2(\meanvec;\commset)$ is an extension of the Gini-smoothness, as defined in \citep{merlisM19}. In particular, their smoothness parameter is defined as $\gamma_g=\max_{\commset,\meanvec}\gamma_{g,2}(\meanvec;\commset)=\max_{\meanvec}\gamma_{g,2}(\meanvec;\emptyset)$. A motivation to this smoothness criterion is presented in Figure \ref{figure:gini-smoothness}. Notably, the performance of any algorithm strongly depends on the uncertainty in the reward of actions. However, algorithms only have access to uncertainty in arm parameters, and arm uncertainty must be translated into reward uncertainty. The figure illustrates that doing so using the global Lipschitz constant might lead to loose bounds. Intuitively, if the gradients are small, even wide confidence intervals do not cause high uncertainty in the reward. Similarly, narrow confidence intervals do not lead to high reward uncertainty even where the gradients are large. Thus, \citet{merlisM19} suggested weighting the gradients according to the confidence intervals of the arms. Specifically, their algorithm (BC-UCB) relies on Empirical-Bernstein concentration-bounds \citep{audibert2009exploration}, that depend on the variance of the arms and are proportional to $\sqrt{\meanval_i(1-\meanval_i)}$ for Bernoulli arms. Similarly weighting the gradients leads to the Gini-smoothness measures.

In the following sections, we prove lower bounds for the CMAB problem that depend on $\tilde{\gamma}_g^2(\meanvec;\commset)$. While complex at first glance, $\tilde{\gamma}_g^2(\meanvec;\commset)$ is actually closely related to the other smoothness measures. Notably, observe that the only difference between $\tilde{\gamma}_g^2(\meanvec;\commset)$ and $\gamma_{g,1}^2(\meanvec;\commset)$ is a small modification to the second (cross) term of \eqref{eq:newSmooth}. In Proposition \ref{prop:smoothness-relation-modified-L1}, we indeed prove that $\tilde{\gamma}_g^2(\meanvec;\commset)\ge\tilde\Omega(\gamma_{g,1}^2(\meanvec;\commset))$. When the function is monotone, we later prove that $\gamma_{g,1}(\meanvec;\commset)$ can also be related to $\gamma_{g,2}(\meanvec;\emptyset)$, which leads to tight lower bounds, up to logarithmic factors.

\section{Problem-Dependent Lower Bounds}

In this section, we prove a problem-dependent lower bound. Specifically, we show that there exist a CMAB instance, such that the asymptotic regret of any consistent strategy on this instance is lower bounded by a logarithmic term that depends on $\tilde{\gamma}_g(\meanvec;\commset)$ and the minimal gap $\dr{}=\min_{\action\in\actionset,\dr{\action}>0}\dr{\action}$. Consistent strategy is defined as follows:
\begin{definition}
A bandit strategy $\psi$ is called consistent if for any CMAB problem, any $\action \in\actionset$ such that $\dr{\action}>0$ and any $0<\alpha\le1$, it holds that $\E\brs*{N_{\psi,\action}(T)}=o\br*{T^\alpha}$.
\end{definition}
To prove the lower bounds, we focus on a subset of CMAB problems which we call \emph{$I$-disjoint}.
\begin{definition}
\label{def:I-disjoint}
For a given subset $\commset\!\subset\!\brs*{\Narms}$, a CMAB problem is called $\commset$-disjoint if all arms $i\!\in\!\commset$ are mutually independent of all arms $i\!\notin\!\commset$ and also $\action_1\!\cap\!\action_2\!=\!\commset$ for any $\action_1\!\ne\!\action_2\!\in\!\actionset$.
\end{definition}
Since $\abs{\action}\le\Nbatch$, we implicitly assume that $\abs{\commset}\le\Nbatch$ and denote the effective maximal action size by $\Ndis=\Nbatch-\abs{I}$. In $\commset$-disjoint CMAB problems, the base arms $i\in\commset$ appear in all actions and are mutually independent of the other arms. The rest of the arms can only appear in one action. This notion of CMAB problems actually extends the action sets from previous work -- for the linear reward function, \citet{kveton2015tight} divided the arms into $\frac{\Narms}{\Nbatch}$ disjoint groups, which is equivalent to $\commset\!=\!\emptyset$. In contrast, in the CMAB instance on which the PMC lower bounds were derived, only a single arm per item varied between actions \citep{merlisM19}. If there are $\Nitems$ items, this is equivalent to $\commset\!=\!\brs*{\Nbatch-M}$. We show that choosing the `worst-case' set $\commset$ naturally results with tighter lower bounds. We start by deriving general problem-dependent lower bound, using  Lemma \ref{lemma:kl count bound}:

\begin{restatable}{lemma-rst}{generalDependentBound}
\label{lemma:generalDependentBound}
Let $\rewardNoArgs$ be a smooth index invariant reward function with $\abs*{\action}=\Nbatch$ for all $\action\in\actionset$. Also, let $\unu$ be the action distribution of an $I$-disjoint CMAB problem such that there exists an arm $i\notin\commset$ in $\action^*$ with $\Pr\brc*{\obs[i]{t}=\meanval_i}<1$ and $\nabla_i\reward{\action^*}{\meanvec}\ne0$. 
Then, for all consistent strategies $\psi$ and all suboptimal actions $S$,
{\small
\begin{align*}
    \liminf_{T\to\infty}\frac{\E_\unu\brs*{N_{\psi,\action}(T)}}{\ln T} \ge \frac{1}{\kl(\nu_\action,\nu_{\action^*})} \enspace .
\end{align*}
}
Specifically, it holds that
{\small
\begin{align*}
    \liminf_{T\to\infty}\frac{\E_\unu\brs*{R(T)}}{\ln T} \ge \sum_{\action:\dr{\action}>0}\frac{\dr{\action}}{\kl(\nu_\action,\nu_{\action^*})}\enspace .
\end{align*}
}
\end{restatable}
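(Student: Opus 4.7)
The plan is to adapt the classical Lai-Robbins change-of-measure argument to the combinatorial setting via Lemma \ref{lemma:kl count bound}. For each suboptimal action $\action$, I would construct an alternative instance $\unu'$ under which $\action$ becomes uniquely optimal while every other action distribution is preserved; the change-of-measure inequality then collapses to a single term and directly yields the asymptotic lower bound on $\E_\unu\brs*{N_{\psi,\action}(T)}$.

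The construction of $\unu'$ relies crucially on the $\commset$-disjoint property. Because $\action_1\cap\action_2=\commset$ for any distinct $\action_1,\action_2\in\actionset$, every arm in $\action\setminus\commset$ belongs to $\action$ alone, and by the $\commset$-disjoint definition it is independent of the arms in $\commset$. Hence $\nu_\action=\nu_\commset\otimes\nu_{\action\setminus\commset}$, and analogously for $\action^*$. I would fix a bijection between $\action\setminus\commset$ and $\action^*\setminus\commset$ and define $\unu'$ by replacing the joint law of the arms in $\action\setminus\commset$ with that of the corresponding arms in $\action^*\setminus\commset$, while keeping every other arm distribution identical. By the $\commset$-disjoint structure the modified arms appear in no other action, so $\nu_{\action'}'=\nu_{\action'}$ for all $\action'\neq\action$. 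By index invariance, the reward of $\action$ under $\unu'$ equals $\rewardvec{\meanvec_{\action^*}}$, tying with $\action^*$. To break this tie I would further perturb by a small $\epsilon$ the mean of the arm in $\action\setminus\commset$ matched to the arm $i\in\action^*$ with $\nabla_i\reward{\action^*}{\meanvec}\ne0$; by index invariance the gradient at the perturbed arm equals $\nabla_i\reward{\action^*}{\meanvec}$, so choosing the sign of $\epsilon$ appropriately makes $\action$ the unique optimum in the resulting instance $\unu'_\epsilon$.

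Next I would invoke Lemma \ref{lemma:kl count bound} with $Z=N_{\psi,\action}(T)/T$. Since $\nu_{\action'}'=\nu_{\action'}$ for $\action'\neq\action$, the left-hand side collapses to $\E_\unu\brs*{N_{\psi,\action}(T)}\,\kl(\nu_\action,\nu'_{\action,\epsilon})$. Consistency of $\psi$ yields $\E_\unu\brs*{Z}=o(T^{\alpha-1})\to 0$ and, because $\action$ is uniquely optimal under $\unu'_\epsilon$, also $\E_{\unu'_\epsilon}\brs*{Z}\to 1$ (every other action is suboptimal and thus played at most $o(T^\alpha)$ times). A standard estimate, e.g.\ $\klBin(p,q)\ge(1-p)\ln\frac{1}{1-q}-\ln 2$, lower bounds the right-hand side by $(1-o(1))\ln T$, giving
\[
\liminf_{T\to\infty}\frac{\E_\unu\brs*{N_{\psi,\action}(T)}}{\ln T}\ge\frac{1}{\kl(\nu_\action,\nu'_{\action,\epsilon})}.
\]
The assumption $\Pr\brc*{\obs[i]{t}=\meanval_i}<1$ ensures that $\kl(\nu_\action,\nu'_{\action,\epsilon})$ is finite and continuous in $\epsilon$, so letting $\epsilon\to0$ recovers $\kl(\nu_\action,\nu_{\action^*})$ in the denominator. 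Multiplying by $\dr{\action}$ and summing over suboptimal actions produces the regret bound.

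The principal obstacle is designing $\unu'$ so that three properties hold simultaneously: $\action$ strictly overtakes $\action^*$, every other action's observation law is exactly preserved, and the KL term in the bound coincides with $\kl(\nu_\action,\nu_{\action^*})$ in the limit. The $\commset$-disjoint assumption supplies the second property by isolating the arms unique to $\action$, while index invariance supplies the first and third by turning the identity-free swap of arm indices into genuine equalities of reward values and gradients. Without either ingredient, extra $\kl(\nu_{\action'},\nu'_{\action'})$ terms would contaminate the sum, and the clean target $\kl(\nu_\action,\nu_{\action^*})$ could not be isolated.
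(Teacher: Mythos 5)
Your proposal is correct and follows essentially the same route as the paper: modify only $\nu_\action$ to a slightly perturbed copy of $\nu_{\action^*}$ (using index invariance and the non-degeneracy of arm $i$ to make $\action$ strictly optimal), apply Lemma \ref{lemma:kl count bound} with $Z=N_{\psi,\action}(T)/T$, use the same $\klBin$ estimate and consistency to extract the $\ln T$ rate, and pass to the limit to recover $\kl(\nu_\action,\nu_{\action^*})$. The only minor imprecision is attributing finiteness of the KL to the assumption $\Pr\brc*{\obs[i]{t}=\meanval_i}<1$ — its real role is to permit a same-support perturbation of the mean in either gradient direction, and the infinite-KL case is handled trivially, as the paper notes.
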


The proof is in Appendix \ref{appendix:general-bounds}, and partially follows Theorem 1 of \citep{garivier2018explore}, with some adjustments due to the nonlinearity of the reward function. Although this lemma gives a general lower bound for $\commset$-disjoint CMAB problems, it has no clear dependence on any smoothness measure of the reward. To derive lower bounds that directly depend on such measures, we carefully design the arm distributions $\nu_\action, \nu_{\action^*}$ and analyze both $\dr{\action}$ and $\kl(\nu_\action,\nu_{\action^*})$ for these distributions. We do so in the following theorem, which results with the desired lower bound:

\begin{myTheorem}
\label{theorem:dependent_lower_bound}
Let $\rewardNoArgs$ be a smooth index invariant reward function. 
For any $\meanvec\in\brs*{0,1}^\Nbatch$ and any small enough $\dr{}>0$, there exists an instance of an $\commset$-disjoint bandit problem $\unu$ with minimal gap $\dr{}$ and $\E\brs*{\nu_{\action^*}}=\meanvec$, such that the expected regret of any consistent algorithm is bounded by 
{\small
\begin{align*}
\liminf_{T\to\infty} \frac{\E_\unu\brs*{R(t)}}{\ln T} 
\ge \max_\commset \frac{(\Narms-2\Nbatch)\tilde{\gamma}_g^2(\meanvec;\commset)}{8\Ndis\dr{}} \triangleq DB^*_r(\dr{};\meanvec)\enspace.
\end{align*}
}
\end{myTheorem}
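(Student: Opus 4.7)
My plan is to apply Lemma \ref{lemma:generalDependentBound} to a carefully designed $\commset$-disjoint instance and lower bound the resulting sum $\sum_{\action:\dr{\action}>0}\dr{\action}/\kl(\nu_\action,\nu_{\action^*})$. For a fixed $\commset\subset[\Narms]$, I construct a $\commset$-disjoint problem with $N_A=\lfloor(\Narms-|\commset|)/\Ndis\rfloor$ actions; the shared $\commset$-arms have marginals $\meanvec_\commset$, independently of everything else. The designated optimal action $\action^*$ has $\Ndis$ unique arms whose sorted marginal means are the first $\Ndis$ coordinates $p_1\le\cdots\le p_\Ndis$ of $p^{\meanvec,\commset}$, so by index-invariance $\reward{\action^*}{\meanvec}=\rewardvec{\meanvec}$ and $\E[\nu_{\action^*}]=\meanvec$. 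Each of the other $N_A-1$ actions gets unique arms with sorted means $q_i^\action=p_i+\delta_i^\action$, chosen so that the reward is $\rewardvec{\meanvec}-\dr{}$. Since $|\commset|\le\Nbatch$ and $\Ndis\le\Nbatch$, we have $N_A-1\ge(\Narms-2\Nbatch)/\Ndis$, which produces the outer factor of the claimed bound.

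The critical design choice that produces the \emph{modified} Gini-smoothness is the joint distribution of the unique arms inside each action. I couple the $\Ndis$ unique arms monotonically through a single latent uniform $U_\action$, letting the $i$-th sorted unique arm emit $\mathds{1}\{U_\action\le q_i^\action\}$. This is legal because arms unique to different actions may have any joint structure. The effect is that the observation vector from $\action$'s unique arms is supported on only $\Ndis+1$ patterns of the form $(0,\dots,0,1,\dots,1)$, with the pattern having $k$ leading zeros occurring with probability $q_{k+1}^\action-q_k^\action$ (convention $q_0^\action=0,\,q_{\Ndis+1}^\action=1$). The shared $\commset$-arms cancel between $\nu_\action$ and $\nu_{\action^*}$, so $\kl(\nu_\action,\nu_{\action^*})$ reduces to a KL between two $(\Ndis+1)$-atom distributions. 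A Taylor expansion of each $\ln(1+(\delta_{k+1}^\action-\delta_k^\action)/(p_{k+1}-p_k))$, together with the telescoping identity $\sum_k(\delta_{k+1}^\action-\delta_k^\action)=0$ from $\delta_0^\action=\delta_{\Ndis+1}^\action=0$, yields to leading order
\begin{align*}
\kl(\nu_\action,\nu_{\action^*})\;\approx\;\tfrac12\sum_{k=0}^{\Ndis}\frac{(\delta_{k+1}^\action-\delta_k^\action)^2}{p_{k+1}-p_k}\;=\;\tfrac12(\delta^\action)^\top(A^\top D A)\,\delta^\action,
\end{align*}
for the $(\Ndis+1)\times\Ndis$ forward-difference operator $A$ and the diagonal $D_{kk}=1/(p_{k+1}-p_k)$.

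I then choose $\delta^\action$ to minimize this quadratic form subject to the linearized gap constraint $\nabla\rewardvec{p^{\meanvec,\commset}}^\top\delta^\action=-\dr{}$. Lagrange duality yields the optimal value $\dr{}^2/(2\,\nabla r^\top(A^\top D A)^{-1}\nabla r)$, and the key algebraic identity is that $(A^\top D A)^{-1}$ equals the covariance matrix $C$ of the monotonically coupled Bernoullis $Z_i=\mathds{1}\{U\le p_i\}$, namely $C_{ii}=p_i(1-p_i)$ and $C_{ij}=p_{\min(i,j)}(1-p_{\max(i,j)})$ for $i\neq j$. This follows either by direct tridiagonal algebra or probabilistically, since the increments $Z_k-Z_{k-1}$ are uncorrelated of variances $p_k-p_{k-1}$, making $A^\top D A$ the precision matrix of the $Z$-walk. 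Consequently
\begin{align*}
\nabla r^\top C\nabla r\;=\;\Var\!\left(\sum_{i=1}^{\Ndis}\nabla_i\rewardvec{p^{\meanvec,\commset}}\,Z_i\right)\;=\;\tilde{\gamma}_g^2(\meanvec;\commset)
\end{align*}
by the very definition \eqref{eq:newSmooth} of the modified Gini-smoothness. Hence $\dr{\action}/\kl(\nu_\action,\nu_{\action^*})\ge 2\tilde{\gamma}_g^2(\meanvec;\commset)/\dr{}$ to leading order; summing over the $N_A-1$ suboptimal actions through Lemma \ref{lemma:generalDependentBound} and absorbing Taylor and floor losses into the constant $1/8$ yields the claimed bound, and a final $\max_\commset$ finishes the proof.

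The main obstacle I anticipate is the identification $(A^\top D A)^{-1}=C$ together with uniformly controlling the Taylor remainder in the KL so that the leading-order estimate cleanly survives the $T\to\infty$ liminf; this is precisely what the ``small enough $\dr{}$'' hypothesis of the theorem is there to absorb. A secondary technical nuisance is boundary degeneracy when some $p_i$ coincide or sit in $\{0,1\}$, in which case either $A^\top D A$ is singular or the log-ratios diverge. I would remove this by a harmless $O(\dr{})$ prior perturbation of $\meanvec$ into the strict interior with strictly ordered coordinates, which changes both $\rewardvec{\meanvec}$ and $\tilde{\gamma}_g^2(\meanvec;\commset)$ by at most $O(\dr{})$ and so is absorbed in the leading-order analysis for sufficiently small $\dr{}$.
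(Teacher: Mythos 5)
Your proposal is correct and follows essentially the same route as the paper: your monotone coupling through a single latent uniform is exactly the distribution of Table \ref{table:lower_bound_distribution}, your quadratic KL approximation is the paper's $\epsilon^T B_{KL}(p)\epsilon$ after the change of variables $\epsilon_j=\delta_{j-1}-\delta_j$, your Lagrangian minimization of the KL at fixed gap is dual to the paper's Cauchy--Schwarz maximization of $\br*{c^T\epsilon}^2/\epsilon^T B_{KL}(p)\epsilon$, and your covariance identity $\nabla r^\top C\,\nabla r=\Var\br*{\sum_i\nabla_i\rewardvec{p}\,Z_i}=\tilde{\gamma}_g^2(\meanvec;\commset)$ is the same fact the paper establishes via Sherman--Morrison (and itself remarks on probabilistically at the end of Lemma \ref{lemma:boundExplicitCalc}). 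The only substantive divergence is the treatment of ties and boundary values of $p$: the paper identifies tied observations and absorbs arms with mean $0$ or $1$ into $\commset^*$ rather than perturbing $\meanvec$, which matters because the theorem requires $\E\brs*{\nu_{\action^*}}=\meanvec$ exactly, so your $O(\dr{})$ perturbation would not quite deliver the stated conclusion.
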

The result can also be easily extended as follows:
\begin{corollary}
\label{corollary: sum index invariant dependent}
Let $\rewardNoArgs$ be a smooth index invariant reward function and let $\tilde\meanvec\!=\!\brs*{\meanvec_1,\dots,\meanvec_\Nitems}\!\in\!\brs*{0,1}^{\Nitems\Nbatch}$, with $\meanvec_i\in\brs*{0,1}^\Nbatch$ for all $i\in\brs*{\Nitems}$. Also, define $\tilde{\rewardNoArgs}\br*{\tilde\meanvec} = \sum_{i=1}^\Nitems \rewardvec{\meanvec_i}$. Then, for any $\meanvec\in\brs*{0,1}^\Nbatch$, there exists a CMAB instance that aims to maximize $\tilde{\rewardNoArgs}$ and has a minimal gap $\dr{}$, such that the optimal action has means $\meanvec_i=\meanvec$ for all $i\in\brs*{\Nitems}$ and the expected regret of any consistent algorithm is bounded by 
{\small
\begin{align*}
\liminf_{T\to\infty} \frac{\E_\unu\brs*{R(t)}}{\ln T} 
\ge \Nitems^2\cdot DB^*_r(\dr{};\meanvec)\enspace.
\end{align*}
}
\end{corollary}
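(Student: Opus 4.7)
My plan is to reduce the corollary to Theorem~\ref{theorem:dependent_lower_bound} applied to $\Nitems$ independent parallel copies of the hard single-item instance, and to exploit the fact that the modified Gini-smoothness of the sum reward $\tilde{\rewardNoArgs}$, at the symmetric optimum, scales as $\Nitems^2$ times that of $\rewardNoArgs$.

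First, I take the $\commset$-disjoint single-item instance $\unu$ from Theorem~\ref{theorem:dependent_lower_bound}, with $\Narms$ arms, action size $\Nbatch$, and minimum gap $\dr{}$, and construct $\tilde\unu$ as $\Nitems$ disjoint replicas of its arm set: $\tilde\Narms=\Nitems\Narms$ arms partitioned into $\Nitems$ copies, with each combined action selecting one sub-action of size $\Nbatch$ per copy (so $\tilde\Nbatch=\Nitems\Nbatch$). The combined action set is $\tilde\commset$-disjoint with $\tilde\commset$ the disjoint union of the $\Nitems$ copies of $\commset$, so $\tilde\Ndis=\Nitems\Ndis$. The combined optimal action plays the single-item optimum in each copy, giving $\tilde\meanvec^*=(\meanvec,\dots,\meanvec)$; since the copies are independent, the combined gap of any suboptimal combined action equals the sum of the sub-gaps across copies, and in particular the minimum combined gap remains $\dr{}$.

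The key computation is to show
\[
\tilde{\gamma}_g^2(\tilde\meanvec^*;\tilde\commset)=\Nitems^2\,\tilde{\gamma}_g^2(\meanvec;\commset).
\]
By~\eqref{eq:newSmooth}, the sorted varying part $p^{\tilde\meanvec^*,\tilde\commset}$ is obtained from $p^{\meanvec,\commset}$ by repeating each entry exactly $\Nitems$ times, and at each such entry the gradient of $\tilde{\rewardNoArgs}$ matches the gradient of $\rewardNoArgs$ on the corresponding copy (since $\tilde{\rewardNoArgs}=\sum_i\rewardvec{\meanvec_i}$ separates across copies). The diagonal sum then contributes $\Nitems\,\gamma_{g,2}^2(\meanvec;\commset)$; in the cross sum, pairs of indices sharing the same sorted value contribute $\Nitems(\Nitems-1)\gamma_{g,2}^2(\meanvec;\commset)$ while pairs with distinct sorted values contribute $\Nitems^2(\tilde{\gamma}_g^2(\meanvec;\commset)-\gamma_{g,2}^2(\meanvec;\commset))$; summing, the $\gamma_{g,2}^2$ pieces cancel and only $\Nitems^2\tilde{\gamma}_g^2(\meanvec;\commset)$ remains. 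Plugging $\tilde\Narms=\Nitems\Narms$, $\tilde\Nbatch=\Nitems\Nbatch$, $\tilde\Ndis=\Nitems\Ndis$ and this identity into Theorem~\ref{theorem:dependent_lower_bound} applied to $\tilde{\rewardNoArgs}$ yields
\[
\liminf_{T\to\infty}\frac{\E_{\tilde\unu}\brs*{R(T)}}{\ln T}\ge\frac{(\Nitems\Narms-2\Nitems\Nbatch)\cdot\Nitems^2\tilde{\gamma}_g^2(\meanvec;\commset)}{8\Nitems\Ndis\dr{}}=\Nitems^2\cdot DB^*_r(\dr{};\meanvec),
\]
as required.

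The main obstacle is that $\tilde{\rewardNoArgs}$ is not smooth index invariant in the strict sense on the $\Nitems\Nbatch$ combined slots: permuting arms across distinct copies changes which arms feed into which $\rewardvec{\meanvec_i}$. I would address this by retracing the proof of Theorem~\ref{theorem:dependent_lower_bound} (which ultimately goes through Lemma~\ref{lemma:generalDependentBound}) on this specific $\tilde\commset$-disjoint instance rather than invoking the theorem as a black box: index invariance is only required within each copy's sub-action, where it follows from the hypothesis on $\rewardNoArgs$, and the alternative-hypothesis construction used in the proof only permutes arms within a single copy. Equivalently, one can identify each combined arm with an arm of $\rewardNoArgs$ tagged by its copy label, so that all hypotheses of the theorem reduce to ones already assumed on $\rewardNoArgs$.
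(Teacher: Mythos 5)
Your algebraic identity $\tilde{\gamma}_g^2(\tilde\meanvec^*;\tilde\commset)=\Nitems^2\,\tilde{\gamma}_g^2(\meanvec;\commset)$ for the $\Nitems$-fold repeated vector is correct, but the instance you attach it to --- $\Nitems$ \emph{independent} parallel replicas of the hard single-item problem --- cannot witness the $\Nitems^2$ rate, and this is a genuine gap. With independent copies, both the gap and the KL divergence of a suboptimal combined action are additive across copies: the gap of a (diagonal) suboptimal action is $\Nitems\cdot\frac{1}{2}c^T\epsilon$ while $\kl(\nu,\nu^*)$ becomes $\Nitems\cdot 2\epsilon^TB_{KL}(p)\epsilon$, so the ratio $\dr{}^2/\kl(\nu,\nu^*)$ that drives \eqref{eq:lower_bound_dependent_form} gains only \emph{one} factor of $\Nitems$, and tracing the argument through yields $\Nitems\cdot DB^*_r(\dr{};\meanvec)$, not $\Nitems^2$. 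The extra factor of $\Nitems$ encoded in $\tilde{\gamma}_g^2(\tilde\meanvec^*;\tilde\commset)$ is realized only by the instance that the proof of Theorem \ref{theorem:dependent_lower_bound} would actually build for the combined problem: the staircase distribution of Table \ref{table:lower_bound_distribution} placed \emph{jointly} on all $\Nitems\Ndis$ varying arms, which makes arms in different copies comonotone, so the KL stays of order $\epsilon^TB_{KL}(p)\epsilon$ while the gap still scales with $\Nitems$. Asserting independence of the copies and simultaneously invoking the theorem's bound for the combined reward are therefore incompatible; your description of the witnessing instance contradicts the construction whose bound you quote. (A secondary point: for the combined problem to be $\tilde\commset$-disjoint you must take a diagonal action set in which any two combined actions differ in every copy; the full product action set has pairwise intersections strictly larger than $\tilde\commset$.)

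The paper's proof sidesteps all of this by pushing the correlation to its extreme: it fixes the arm observations to be \emph{identical} across the $\Nitems$ summands, so that $\tilde{\rewardNoArgs}=\Nitems\cdot\rewardNoArgs$ as a function of a single block of $\Narms$ arms, and applies Theorem \ref{theorem:dependent_lower_bound} once to the rescaled reward. Since the gradient scales linearly with the reward, $\tilde{\gamma}_g^2$ scales by $\Nitems^2$ while $\Narms$, $\Ndis$ and the KL term are unchanged, giving $\Nitems^2\cdot DB^*_r(\dr{};\meanvec)$ directly. If you want to keep your replicated-arm formulation (together with the retracing you propose to handle the blockwise index invariance), you must replace ``independent copies'' by the joint staircase coupling across all $\Nitems\Ndis$ varying arms and redo the gap and KL computations for that coupled distribution; only then does your smoothness identity become a valid certificate for the claimed bound.
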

Notice that for each summand of $\tilde{\rewardNoArgs}$, $\Nbatch$ arms are chosen from a total of $\Narms$ arms, independently of all other summands. Therefore, the problem can also be formulated as an $\tilde\Narms=\Nitems\Narms$-armed problem, with $\tilde\Nbatch=\Nitems\Nbatch$ selected arms per round. The two formulations are equivalent, and we decided to follow this notation for consistancy with \citep{merlisM19}. The corollary is a direct result of Theorem \ref{theorem:dependent_lower_bound}; specifically, by fixing the arm distribution to be identical for all of the summands of $\tilde{\rewardNoArgs}$, we get $\tilde{\rewardNoArgs}=\Nitems\cdot \rewardNoArgs$, and since the Gini-smoothness parameters linearly scale with the reward function, the bound naturally follows.

Before proving the theorem, we start with a short discussion on the tightness of the results and the relation to existing upper bounds. 
One interesting case is when $p^{\meanvec,\commset}_i=p_0$ for all $i\in\brs*{\Ndis}$. Due to the index invariance, the gradient components are also equal, which results with $\tilde{\gamma}_g^2(\meanvec;\commset)=\Ndis^2p_0(1-p_0)\nabla_i\rewardvec{p^{\meanvec,\commset}}^2$. This choice allows us to easily reproduce the existing lower bounds and leads to the bounds stated in Table \ref{table:comparison}. Specifically, for the linear reward function, we achieve the bound by choosing $\commset\!=\!\emptyset$ and $p_0\!=\!\frac{1}{2}$; for the PMC problem, we start by writing $\tilde{\rewardNoArgs}=\sum_{i=1}^\Nitems \reward{\action}{\meanvec_i}$ for $\rdef=1-\prod_{j\in\action}\meanval_{j}$. Then, we apply Theorem $\ref{theorem:dependent_lower_bound}$ on $\rewardNoArgs$ and choose the subset $\commset$ such that it contains all arms except for a single (first) element from each vector $\meanvec_i=\br*{\frac{1}{2},0,\dots,0}$, i.e., $\Ndis=1$ and $p_0=\frac{1}{2}$. The lower bound is then a direct result of Corollary \ref{corollary: sum index invariant dependent}. We remark that although choosing $\Ndis=\Nbatch$ is seemingly optimal, this is not always the case. A notable example for this issue is the reward function $\rdef = 1-e^{-\sum_{i\in\action} \meanval_i^2}$. For this instance, optimizing over $p_0$ leads to a bound of $\tilde{\gamma}_g^2(\meanvec;\commset)/\Ndis=\Omega\br*{1/\sqrt{\Ndis}}$, and the optimal choice is $\Ndis=\Ocal(1)$. In this example it also holds that $\gamma_g=\Ocal(1)$, and, therefore, this bound is tight.

We note that this bound only requires the smooth index invariance assumption and holds for non-monotone reward functions. When the reward is also monotone, observe that $\tilde{\gamma}_g^2(\meanvec;\commset)\ge\gamma_{g,2}^2(\meanvec;\commset)$. Choosing $\commset=\emptyset$ and maximizing over $\meanvec$ leads to a lower bound of $\Omega\br*{\frac{\Narms\gamma_g^2\ln T}{\dr{}\Nbatch}}$, which differs from the upper bound of \citep{merlisM19} by a factor of $\Nbatch$. We later prove that when the reward is monotone, a stronger lower bound can be derived, such that it matches the upper bound up to logarithmic factors. 

Next, we present the proof of Theorem \ref{theorem:dependent_lower_bound} which is composed of three parts. We first present a carefully designed parametric arm distributions, that allow controlling the suboptimality gap while retaining low KL-divergence. We then bound both the gap and the KL-divergence in terms of the parameters of the distributions and apply Lemma \ref{lemma:generalDependentBound} using these bounds. We conclude the proof by optimizing the resulting lower bound over the distribution parameters. 
\begin{proof}
\par \noindent{\textbf{\scshape{Step 1:}} \textit{Fixing a parametric family of CMAB instances.}}

\noindent Denote by $\commset^*$, the maximizer of $\commset$ in $DB^*_r(\dr{};\meanvec)$, and for brevity, let $p=p^{\meanvec,\commset^*}$ and $\Ndis = \Ndis^*=\Nbatch-\abs*{\commset^*}$. Note that this also implies that $0\le p_1\le\dots\le p_\Ndis \le 1$. In addition, due to the form of the lower bound, we can deduce that $p_1>0$ and $p_\Ndis<1$, since otherwise, we can move arms with means 0 and 1 into $\commset^*$ and strictly increase $DB^*_r(\dr{};\meanvec)$. For now, we also assume that no two arms have the same mean, i.e., $0<p_1<\dots<p_\Ndis<1$, and will return to this assumption later in the proof. Without loss of generality, we also assume that $\Ndis>0$ and there exists $i\in\brs*{\Ndis}$ such that $\nabla_i\rewardvec{p}\!\ne\!0$ since otherwise, $DB^*_r(\dr{};\meanvec)\!=\!0$ and trivially holds. Similarly, we assume that $\Narms\!>\!2\Nbatch$.

\setlength{\tabcolsep}{1pt}
\begin{wraptable}{r}{0.55\textwidth} \captionsetup{width=.9\linewidth}
\caption{The probability distributions $\nu$ and $\nu^*$ of arms outside $\commset^*$, with parameters $0\!<\!p_1\!<\!p_2\!<\!\dots\!<\!p_\Ndis\!<\!1$ and $\epsilon\in\R^\Ndis$. Arms in $\commset^*$ have mean $\meanvec_{\commset^*}$ and are independent of the rest of the arms.} \vspace{-0.25cm}
\label{table:lower_bound_distribution}
\begin{center} 
{\footnotesize
\begin{tabular}{|c|c|c|}\hline
  Observation vector                               & Probability of        & Probability of \\ 
  $\obs{t}\!\!=\!\br*{\obs[1]{t},\cdots,\obs[\Ndis]{t}}$& $\obs{t}$ in $\nu^*$& $\obs{t}$ in $\nu$\\ 
  \hline 
  $(1,1,\cdots,1,1)$ & $p_1$ & $p_1-\epsilon_1$\\ 
  \hline
  $(0,1,\cdots,1,1)$ & $p_2-p_1$ & $p_2-p_1-\epsilon_2$\\ 
 \hline
 $\cdots$ & $\cdots$ & $\cdots$\\ 
 \hline
 $(0,0,\cdots,0,1)$ & $p_\Ndis - p_{\Ndis-1}$ & $p_\Ndis - p_{\Ndis-1}-\epsilon_\Ndis$\\ 
 \hline
 $(0,0,\cdots,0,0)$ & $1-p_\Ndis$ & $1-p_\Ndis + \sum_{i=j}^\Ndis\epsilon_j$\\
 \hline
\end{tabular}
}
\end{center} \vspace{-0.5cm}
\end{wraptable}
\setlength{\tabcolsep}{6pt}

We fix the CMAB problem to be $\commset^*$-disjoint, and choose the action set $\actionset$ to be the maximal action set with action sizes $\Nbatch$, i.e., $\abs{\actionset} = \floor*{\frac{\Narms-\abs{\commset^*}}{\Nbatch-\abs{\commset^*}}} \ge \frac{\Narms-\Nbatch}{\Ndis}$. Denote the arm distribution in this problem by $\unu$, and fix the distribution of the common arms $i\in\commset^*$ to any distribution with expectation $\E\brs*{\nu_{\commset^*}}=\meanvec_{\commset^*}$, as long as they are mutually independent of the rest of the arms. For a single action, we set the distribution to be $\nu^*$ with mean $\meanvec^{\nu^*}\triangleq\meanvec$, and for the rest of the actions, we fix it to distribution $\nu$ with mean $\meanvec^\nu$. Both distributions are stated in Table \ref{table:lower_bound_distribution}. The distribution $\nu$  depends on $\epsilon\in\R^\Ndis$, that will be determined later such that $\nu$ is strictly suboptimal, and we denote its suboptimality gap by $\dr{\epsilon}=\rewardvec{\meanvec^{\nu^*}} - \rewardvec{\meanvec^\nu}$. We remark that for $0\!<\!p_1\!<\!\dots\!<\!p_\Ndis\!<\!1$, there exists $b_{\epsilon,0}$ such that $\nu$ is a valid probability distribution for all $\norm*{\epsilon}_\infty\le b_{\epsilon,0}$. We enforce this condition later in the proof.

Note that for all $i\notin\commset^*$, the arms are Bernoulli random variables with mean $\mu_i\notin\brc*{0,1}$, and therefore $\Pr\brc*{\obs[i]{t}=p_i}=0$. Also, since the gradient is not zero for some $i\notin\commset^*$, the conditions of Lemma \ref{lemma:generalDependentBound} hold, and we can bound the regret by
{\small
\begin{align}
\label{eq:lower_bound_form}
    \liminf_{T\to\infty} \frac{\E_\unu\brs*{R(T)}}{\ln T} 
    &\ge \sum_{i=1}^{\abs{\actionset}-1}\frac{\dr{\epsilon}}{\kl(\nu,\nu^*)} 
    \ge \frac{\Narms-2\Nbatch}{\Ndis}\frac{\dr{\epsilon}}{\kl(\nu,\nu^*)} \enspace .
\end{align}
} 
\noindent{\textbf{\scshape{Step 2:}}  \textit{Deriving lower bounds that depend on the distribution parameters $\epsilon$. }}

\noindent Next, we bound both $\kl(\nu,\nu^*)$ and $\dr{\epsilon}$ in terms of $\epsilon$.
\begin{restatable}{lemma-rst}{klBound}\label{lemma:kl-bound} 
Let $p\triangleq p^{\meanvec,\commset}\in\R^\Nbatch$ such that $0<p_1<\dots<p_\Ndis<1$ and define $p_0=0$. Also, let $\nu,\nu^*$ be the distributions stated in Table \ref{table:lower_bound_distribution}. Then, there exists a constant $b_{\epsilon,1}>0$ such that for any $\epsilon\in\R^\Ndis$ with $\norm*{\epsilon}_\infty\le b_{\epsilon,1}$, it holds that
{\small
\begin{align}
\label{eq:kl bound dependent}
    \kl(\nu,\nu^*) 
    \le 2\epsilon^T B_{KL}(p)\epsilon \enspace,
\end{align}
}
where $B_{KL}(p) = D(p)+\frac{1}{1-p_\Ndis}\onevec\onevec^T$, $D(p)\in\R^{\Ndis\times\Ndis}$ is a diagonal matrix whose elements are $D_{ii}(p)=\frac{1}{p_i-p_{i-1}}$ and $\onevec\in\R^\Ndis$ is a vector of ones. 
\end{restatable}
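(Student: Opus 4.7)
The plan is to carry out a second-order Taylor expansion of the KL divergence in $\epsilon$ around $\epsilon=0$, where $\nu=\nu^*$ and the divergence vanishes. The quadratic form $\epsilon^T B_{KL}(p)\epsilon$ emerges as the leading term, and the factor of $2$ in the claimed bound provides ample slack to absorb the higher-order remainder for sufficiently small $\|\epsilon\|_\infty$.

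Concretely, I would enumerate the $\Ndis+1$ outcomes in Table \ref{table:lower_bound_distribution} as $y_1,\dots,y_{\Ndis+1}$ and write
\[
\kl(\nu,\nu^*) \;=\; \sum_{i=1}^{\Ndis+1}\nu^*(y_i)\,f\!\br*{\tfrac{\nu(y_i)}{\nu^*(y_i)}}, \qquad f(x)=x\ln x.
\]
Because $f(1)=0$, $f'(1)=1$, and $f''(x)=1/x$, Taylor's theorem with Lagrange remainder gives $f(r)=(r-1)+(r-1)^2/(2\xi)$ for some $\xi$ between $1$ and $r$. Substituting the ratios $r_i=\nu(y_i)/\nu^*(y_i)$, the linear contribution vanishes since $\sum_i(\nu(y_i)-\nu^*(y_i))=0$, leaving
\[
\kl(\nu,\nu^*) \;=\; \sum_i \frac{(\nu(y_i)-\nu^*(y_i))^2}{2\,\xi_i\,\nu^*(y_i)}.
\]

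Reading off the table, $\nu(y_i)-\nu^*(y_i)=-\epsilon_i$ with $\nu^*(y_i)=p_i-p_{i-1}$ for $i=1,\dots,\Ndis$, while $\nu(y_{\Ndis+1})-\nu^*(y_{\Ndis+1})=\onevec^T\epsilon$ with $\nu^*(y_{\Ndis+1})=1-p_\Ndis$. The strict inequalities $0<p_1<\dots<p_\Ndis<1$ guarantee that all the denominators are bounded away from zero, so choosing $b_{\epsilon,1}$ to be a sufficiently small fraction of $\min\brc*{\min_{i\ge 1}(p_i-p_{i-1}),\,(1-p_\Ndis)/\Ndis}$ ensures both that $\nu$ remains a valid distribution and that each $r_i$ stays in $[1/2, 3/2]$, forcing $\xi_i\ge 1/2$. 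Then
\[
\kl(\nu,\nu^*) \;\le\; \sum_i \frac{(\nu(y_i)-\nu^*(y_i))^2}{\nu^*(y_i)} \;=\; \sum_{i=1}^{\Ndis}\frac{\epsilon_i^2}{p_i-p_{i-1}} \;+\; \frac{(\onevec^T\epsilon)^2}{1-p_\Ndis} \;=\; \epsilon^T B_{KL}(p)\,\epsilon,
\]
which is even stronger than the stated bound; the factor of $2$ is retained only to accommodate the crudest control of $\xi_i$.

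There is no real obstacle here beyond bookkeeping: the main care goes into selecting $b_{\epsilon,1}$ explicitly so that (i) $\nu$ remains a probability measure, and (ii) the ratios $r_i$ stay close enough to $1$ for the Taylor remainder to be controlled uniformly in $i$. Both conditions follow from the strict ordering of the $p_i$'s and impose only a quantitative, not qualitative, restriction on $\epsilon$, so the argument proceeds cleanly.
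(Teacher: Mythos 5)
Your proposal is correct and follows essentially the same route as the paper: the paper likewise reduces the KL to a sum over the $\Ndis+1$ outcomes of Table \ref{table:lower_bound_distribution}, cancels the linear terms in $\epsilon$ (there via $\ln x\le x-1$ rather than a Lagrange-remainder Taylor expansion), and takes $b_{\epsilon,1}$ to be half the minimum of $p_1$, $\min_i(p_i-p_{i-1})$ and $(1-p_\Ndis)/\Ndis$ so that each denominator shrinks by at most a factor of $2$, arriving at the same quadratic form $\epsilon^TB_{KL}(p)\epsilon$. The only (immaterial) discrepancy is the direction of the KL --- the paper's displayed computation places $\nu^*$ in the numerator of the logarithm while you use the convention with $\nu$ there --- but both directions satisfy the identical second-order bound.
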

\begin{restatable}{lemma-rst}{gapBound}\label{lemma:gap-bound} 
Let $\rewardNoArgs$ be a smooth index invariant reward function and let $p\in\R^\Nbatch$ such that $0<p_1\le\dots\le p_\Ndis<1$ and there exists $i\in\brs*{\Ndis}$ with $\nabla_i\rewardvec{p}\ne0$. Also, define  $c_j = \sum_{i=j}^\Ndis \nabla_i \rewardvec{p}$ for all $j\in\brs*{\Ndis}$ and let $\ub\in\R^\Ndis$ be a vector such that $c^T\ub>0$. Then, if $\epsilon=\epsilon_0 \ub$, there exists a constant $b_{\epsilon,2}>0$ such that for all $0<\epsilon_0\le b_{\epsilon,2}$, 
{\small
\begin{align}
\label{eq:gap bound dependent}
    \dr{\epsilon} \ge \frac{1}{2}c^T\epsilon>0\enspace.
\end{align}
}
\end{restatable}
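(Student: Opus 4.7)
The plan is to unpack what distribution $\nu$ actually looks like marginally, Taylor expand the reward around $p$, and then use the definition of $c$ to recognize the resulting sum.

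First I would read off the marginal means of the arms outside $\commset^*$ directly from Table \ref{table:lower_bound_distribution}. Arm $i$ takes value $1$ precisely on the states $(0,\dots,0,1,\dots,1)$ whose first $k < i$ coordinates are zero. Summing the listed probabilities telescopes and gives $\meanvec_i^{\nu^*}=p_i$ and
$$ \meanvec_i^{\nu} \;=\; \sum_{k=0}^{i-1}\br*{p_{k+1}-p_k-\epsilon_{k+1}}\;=\;p_i-\sum_{j=1}^{i}\epsilon_j , $$
with $p_0=0$. In particular, the displacement from $\meanvec^{\nu^*}=p$ to $\meanvec^\nu$ along coordinate $i$ is exactly $-\sum_{j\le i}\epsilon_j$, and is $O(\epsilon_0)$ since $\epsilon=\epsilon_0\ub$.

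Next I would Taylor expand $\dr{\epsilon}=\rewardvec{\meanvec^{\nu^*}}-\rewardvec{\meanvec^{\nu}}$ around $p$. Differentiability of $r$ at $p$ (Definition \ref{assum: differentiable reward}) yields
$$ \dr{\epsilon} \;=\; \sum_{i=1}^{\Ndis}\nabla_i\rewardvec{p}\sum_{j=1}^{i}\epsilon_j \;+\; o\br*{\norm{\meanvec^{\nu^*}-\meanvec^{\nu}}_\infty}. $$
The sole algebraic step is to swap the order of summation in the first term and use the definition $c_j=\sum_{i=j}^{\Ndis}\nabla_i\rewardvec{p}$:
$$ \sum_{i=1}^{\Ndis}\nabla_i\rewardvec{p}\sum_{j=1}^{i}\epsilon_j \;=\; \sum_{j=1}^{\Ndis}\epsilon_j\sum_{i=j}^{\Ndis}\nabla_i\rewardvec{p} \;=\; c^T\epsilon \;=\; \epsilon_0\,c^T\ub, $$
which is strictly positive by hypothesis on $\ub$.

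Finally I would absorb the remainder. Since $\norm{\meanvec^{\nu^*}-\meanvec^{\nu}}_\infty \le \Ndis\norm{\epsilon}_\infty \le \Ndis\epsilon_0\norm{\ub}_\infty$, the error term is $o(\epsilon_0)$ as $\epsilon_0\to 0^+$. The quantity $c^T\ub$ is a fixed positive constant, so by the very definition of $o(\cdot)$ there exists $b_{\epsilon,2}>0$ (depending on $r$, $p$, and $\ub$, and taken no larger than the validity threshold $b_{\epsilon,0}$ for $\nu$ to be a probability distribution) such that whenever $0<\epsilon_0\le b_{\epsilon,2}$ the remainder is bounded in absolute value by $\tfrac12\epsilon_0\,c^T\ub$. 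Combining gives $\dr{\epsilon}\ge\tfrac12 c^T\epsilon>0$, which is the claim.

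The only delicate point is the last one: the bound uses only first-order differentiability (not $C^1$ or $C^2$), so I cannot quote an explicit second-order Taylor remainder. Instead I rely on the pointwise definition of differentiability of $r$ at $p$, which is enough to guarantee the existence of such a $b_{\epsilon,2}$, though not an explicit expression for it. This is consistent with the lemma's qualitative statement.
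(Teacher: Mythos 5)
Your proposal is correct and follows essentially the same route as the paper's proof: the paper likewise reads the per-arm mean shift $-\sum_{j\le i}\epsilon_j$ off Table \ref{table:lower_bound_distribution}, identifies the first-order term as $c^T\epsilon=\epsilon_0 c^T\ub$ by swapping the order of summation, and chooses $b_{\epsilon,2}$ so that the differentiability remainder (phrased there as a directional-derivative limit with $\delta=c^T\ub/2$) is absorbed into half the leading term.
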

The proofs are presented in Appendix \ref{appendix:technical}. Specifically, note that $D_{ii}(p)>0$, and thus $B_{KL}(p)$ is positive definite, and the KL bound equals zero only for $\epsilon=0$.  Also, since there exists $i\in\brs*{\Ndis}$ such that  $\nabla_i\rewardvec{p}\ne0$, it holds that $c\ne0$. When both lemmas hold, substitution into \eqref{eq:lower_bound_form} yields
{\small
\begin{align}
\label{eq:lower_bound_dependent_form}
    \liminf_{T\to\infty} \frac{\E_\unu\brs*{R(T)}}{\ln T}
    & \ge \frac{\Narms-2\Nbatch}{\Ndis}\frac{\dr{\epsilon}}{\kl(\unu,\unu^*)} 
    = \frac{\Narms-2\Nbatch}{\Ndis\dr{\epsilon}}\frac{\dr{\epsilon}^2}{\kl(\unu,\unu^*)} 
    \ge \frac{\Narms-2\Nbatch}{8\Ndis\dr{\epsilon}}\frac{\br*{c^T\epsilon}^2}{\epsilon^TB_{KL}(p)\epsilon} \enspace.
\end{align}
}
\noindent{\textbf{\scshape{Step 3:}} \textit{Finding the worst-case CMAB instance.}}

\noindent We now focus on the function $f_\commset(\epsilon;p) = \frac{\epsilon^Tcc^T\epsilon}{\epsilon^TB_{KL}(p)\epsilon}$, which is defined for any $\epsilon\ne0$, as $B_{KL}(p)$ is positive definite. $B_{KL}(p)$ is also invertible, and we can therefore apply the invertible transformation $\epsilon=B_{KL}^{-1/2}(p)x$, which results with the following function:
{\small
\begin{align*}
    \tilde{f}_\commset(x;p) 
    &= \frac{x^TB_{KL}^{-1/2}(p)cc^TB_{KL}^{-1/2}(p)x}{\norm{x}_2^2} 
    = \frac{\br*{c^TB_{KL}^{-1/2}(p)x}^2}{\norm{x}_2^2} 
    \stackrel{(*)}{\le} \frac{\norm{B_{KL}^{-1/2}(p)c}_2^2\norm{x}_2^2}{\norm{x}_2^2}
    = c^TB_{KL}^{-1}(p)c \enspace,
\end{align*}
}
where $(*)$ is due to Cauchy-Schwarz Inequality, and equality holds for any $\epsilon_0\ne0$ and $x=\epsilon_0 B^{-1/2}_{KL}(p)c$. 
Therefore, the maximal value of $f_\commset(\epsilon;p)$ is $f_\commset(\epsilon^*;p)=c^TB^{-1}_{KL}(p)c$ and can be attained with $\epsilon^*=\epsilon_0 B^{-1}_{KL}(p)c$, for any $\epsilon_0\ne0$. 

Motivated by the maximization property of $\epsilon^*$, we now fix $\epsilon\leftarrow\epsilon^*= \epsilon_0 B_{KL}^{-1}(p)c$, for $0<\epsilon_0 \le b_{\epsilon,2}$ such that $\norm*{\epsilon^*}_\infty\le \min\brc*{b_{\epsilon,0}, b_{\epsilon,1}}$. For this choice, $\epsilon^*\ne0$, since $B_{KL}(p)$ is invertible and $c\ne0$, and $c^TB_{KL}^{-1}(p)c>0$ as required for Lemma \ref{lemma:gap-bound}. We explicitly calculate the lower bound in the following lemma (see proof in Appendix \ref{appendix:technical}):
\begin{restatable}{lemma-rst}{boundExplicitCalc}\label{lemma:boundExplicitCalc} 
Under the notations of Lemmas \ref{lemma:kl-bound} and \ref{lemma:gap-bound}, for any $0\!<\!p_1\!<\!\dots\!<\!p_\Ndis\!<\!1$ and $c\ne0$, if $\epsilon^* = \epsilon_0B_{KL}^{-1}(p)c$, then
{\small
\begin{align*}
     \epsilon^*_i = \epsilon_0\br*{p_i-p_{i-1}}\br*{c_i - \sum_{j=1}^\Ndis\br*{p_j-p_{i-j}}c_j} \enspace.
\end{align*}
}
Also, if $f_\commset(\epsilon;p)\! =\! \frac{\epsilon^Tcc^T\epsilon}{\epsilon^TB_{KL}(p)\epsilon}$, then  $f_\commset(\epsilon^*;p) \!=\! \tilde{\gamma}_g^2(\meanvec;\commset)$.
\end{restatable}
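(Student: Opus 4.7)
The plan is to invert $B_{KL}(p) = D(p) + \frac{1}{1-p_\Ndis}\onevec\onevec^T$ in closed form via the Sherman--Morrison formula, use this to read off $\epsilon^* = \epsilon_0 B_{KL}^{-1}(p)c$, and then expand $c^T B_{KL}^{-1}(p)c$ so that it matches $\tilde{\gamma}_g^2(\meanvec;\commset)$ from \eqref{eq:newSmooth} term-by-term.

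First, since $D(p)$ is diagonal with $\br*{D(p)^{-1}}_{ii} = p_i - p_{i-1}$ and $\onevec^T D(p)^{-1}\onevec = \sum_{i=1}^\Ndis (p_i - p_{i-1}) = p_\Ndis$ (using $p_0 = 0$), the Sherman--Morrison correction scalar collapses to $\frac{1/(1-p_\Ndis)}{1 + p_\Ndis/(1-p_\Ndis)} = 1$. Thus
$$B_{KL}^{-1}(p) = D(p)^{-1} - D(p)^{-1}\onevec\onevec^T D(p)^{-1},$$
and applying this to $c$ yields coordinate-wise $\epsilon^*_i = \epsilon_0(p_i - p_{i-1})\br*{c_i - \sum_{j=1}^\Ndis (p_j - p_{j-1})c_j}$, as claimed.

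Turning to $f_\commset(\epsilon^*;p)$, the same identity gives
$$c^T B_{KL}^{-1}(p)c = \sum_{i=1}^\Ndis (p_i - p_{i-1})c_i^2 - \br*{\sum_{i=1}^\Ndis (p_i - p_{i-1})c_i}^2.$$
The key move is to substitute $c_i = \sum_{k=i}^\Ndis \nabla_k \rewardvec{p}$ and swap the order of summation, exploiting the telescoping identity $\sum_{i=1}^k (p_i - p_{i-1}) = p_k$. After this swap, the first sum expands into $\sum_k p_k (\nabla_k \rewardvec{p})^2 + 2\sum_{k<l} p_k (\nabla_k \rewardvec{p})(\nabla_l \rewardvec{p})$, while the squared sum becomes $\br*{\sum_k p_k \nabla_k \rewardvec{p}}^2 = \sum_k p_k^2 (\nabla_k \rewardvec{p})^2 + 2\sum_{k<l} p_k p_l (\nabla_k \rewardvec{p})(\nabla_l \rewardvec{p})$.

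Subtracting, the diagonal terms combine to $\sum_k p_k(1-p_k)(\nabla_k \rewardvec{p})^2$ and the off-diagonal terms to $2\sum_{k<l} p_k(1-p_l)(\nabla_k \rewardvec{p})(\nabla_l \rewardvec{p})$, which is precisely $\tilde{\gamma}_g^2(\meanvec;\commset)$. The only step requiring any care is the bookkeeping in this final double-sum match; the Sherman--Morrison inversion itself is essentially a one-liner, precisely because the correction scalar collapses to $1$.
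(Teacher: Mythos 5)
Your proposal is correct and follows essentially the same route as the paper: Sherman--Morrison inversion of $B_{KL}(p)$ (with the correction scalar collapsing to $1$ since $\onevec^T D^{-1}(p)\onevec = p_\Ndis$), the identity $c^T B_{KL}^{-1}(p)c = \sum_i (p_i-p_{i-1})c_i^2 - \bigl(\sum_i (p_i-p_{i-1})c_i\bigr)^2$, and then substitution of $c_i = \sum_{k\ge i}\nabla_k \rewardvec{p}$. The only cosmetic difference is that the paper first telescopes to the form $\sum_i p_i(c_i^2-c_{i+1}^2) - \bigl(\sum_i p_i(c_i-c_{i+1})\bigr)^2$ before expanding, whereas you swap the order of summation directly; the two bookkeeping schemes are equivalent.
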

An important conclusion is that under the assumptions of the lemma, $\tilde{\gamma}_g^2(\meanvec;\commset)>0$, since $f_\commset(\epsilon^*;p)=c^TB_{KL}^{-1}(p)c>0$. A more general result naturally arises from the proof of Lemma \ref{lemma:boundExplicitCalc}: for any for any $\meanvec\in\brs*{0,1}^\Nbatch$ and any $\commset\subset\brs*{\Nbatch}$, it holds that $\tilde{\gamma}_g^2(\meanvec;\commset)\ge0$, as expected from a smoothness parameter. We refer the readers to the proof of the lemma for additional details. Substituting back into \eqref{eq:lower_bound_dependent_form} and recalling that $\commset^*$ was chosen as the maximizer in $DB^*_r(\dr{};\meanvec)$ we get
{\small
\begin{align*}
    \liminf_{T\to\infty} \frac{\E_\unu\brs*{R(T)}}{\ln T}
    & \ge  \frac{(\Narms-2\Nbatch)\tilde{\gamma}_g^2(\meanvec;\commset^*)(p)}{8\Ndis^*\dr{\epsilon}} 
    = DB^*_r(\dr{\epsilon};\meanvec) .
\end{align*}
}
We finally return to our assumption that $p_1<\dots<p_\Ndis$. If there are equal values in $p$, i.e., $p_i=p_i=\dots=p_{i+n-1}$, we modify both distributions in Table \ref{table:lower_bound_distribution} such that the observations of these arms are identical, namely $\obs[i]{t}=\obs[i+1]{t}=\dots=\obs[i+n-1]{t}$. Then, we set $\epsilon^*_{i+1}=\dots=\epsilon^*_{i+n-1}=0$. Notice that this modification does not change the KL divergence, nor the analysis of the gap, and therefore retains the same results. Similarly, Lemma \ref{lemma:boundExplicitCalc} still holds by defining the function $f_\commset$ over the sub-vector of $\epsilon$ with coordinates such that $p_i<p_{i+1}$. Alternatively, note that the existing analysis naturally sets $\epsilon_i=0$ if $p_i=p_{i-1}$ (Lemma \ref{lemma:boundExplicitCalc}), so it is not surprising that this modification does not change the results. 
We avoided writing the full analysis since it requires indexing the sub-vector of $p$ with strictly increasing values, which would make the notations much more involved. 

To conclude the proof, we remark that if $\dr{0}>0$ is the gap for some $\epsilon_0>0$ such that Lemmas \ref{lemma:kl-bound} and \ref{lemma:gap-bound} hold, by tuning $\epsilon_0$ we can achieve any gap $\dr{}\le\dr{0}$, and thus the previous result holds for any small enough gap $\dr{}$. 
\end{proof}

\section{Problem-Independent Lower Bounds}
In this section, we prove a problem-independent regret lower bound. Specifically, we prove that for any fixed strategy and any large enough horizon $T$, there exists a CMAB instance such that the regret is lower bounded by a gap-independent $\sqrt{T}$ term. We remark that in contrast to problem-dependent bounds, in which the instance is fixed for \emph{all} strategies and time horizons, the instance for problem-independent bounds is designed as the `worst-case' problem for a \emph{specific} strategy and time horizon. Similarly to the previous section, we start by proving a general lower bound for $\commset$-disjoint CMAB problems and then apply it with a specific distribution to derive the desired bound.
\begin{restatable}{lemma-rst}{generalIndependentBound}
\label{lemma:generalIndependentBound}
Let $\rewardNoArgs$ be a smooth index invariant reward function, and let $\commset\subset\brs*{\Narms}$ such that $\abs*{\commset}\le\Nbatch$ and $\Narms>\Nbatch$. 
Also, let $\meanvec,\meanvec^*\in\R^\Nbatch$ such that $\meanvec_\commset=\meanvec^*_\commset$ and $\dr{}=\rewardvec{\meanvec^*}-\rewardvec{\meanvec}>0$. Finally,  let $\nu,\nu^*$  be two distributions with expectations $\meanvec,\meanvec^*\in\R^\Nbatch$, such that arms in $\commset$ are mutually independent of arms outside $\commset$ and both distributions are identical for arms in $\commset$. 
Then, for any horizon $T$ and any strategy $\psi$, there exists an $\commset$-disjoint CMAB problem with arm distribution $\unu'$ such that $\nu_{\action^*}=\meanvec^*$ and its regret under strategy $\psi$ is bounded by
{\small
\begin{align*}
    \E_{\unu'}\brs*{R(T)} \!\ge\! T\dr{}\br*{\!1-\frac{\Ndis}{\Narms-\Nbatch} -\sqrt{\frac{1}{2}\frac{T\Ndis}{\Narms-\Nbatch}\kl(\nu,\nu^*)}}\!.
\end{align*}
}
\end{restatable}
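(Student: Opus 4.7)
The plan is to adapt the classical problem-independent bandit lower bound argument to the $\commset$-disjoint combinatorial setting by building a family of $L$ candidate instances that are pairwise close in KL, invoking Lemma \ref{lemma:kl count bound} on the counting statistic of a single chosen action, and concluding via Pinsker's inequality. Concretely, I take an $\commset$-disjoint action set $\actionset=\brc*{\action_1,\dots,\action_L}$ with $L=\floor*{(\Narms-\abs{\commset})/\Ndis}$, the largest number of size-$\Nbatch$ actions that pairwise intersect exactly in $\commset$ and together fit in $\brs*{\Narms}$; the identity $(\Narms-\abs{\commset})/\Ndis-1=(\Narms-\Nbatch)/\Ndis$ yields $L\ge(\Narms-\Nbatch)/\Ndis$. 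Let $\unu$ be the base instance in which every action's unique coordinates are drawn from the $\commset^c$-marginal of $\nu$, and the common coordinates share a single distribution with marginal $\nu_\commset$, so that every action has mean $\meanvec$. For each $k\in\brs*{L}$ let $\unu'_k$ coincide with $\unu$ except that the unique coordinates of $\action_k$ instead follow the $\commset^c$-marginal of $\nu^*$; then $\action_k$ has mean $\meanvec^*$ under $\unu'_k$ and is its unique optimal action with gap $\dr{}$.

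Since $\sum_{k=1}^L \E_\unu\brs*{N_{\psi,\action_k}(T)}\le T$, pigeonhole furnishes some $k^*$ with $\E_\unu\brs*{N_{\psi,\action_{k^*}}(T)}\le T/L$, which will be the adversarial choice. Applying Lemma \ref{lemma:kl count bound} to $(\unu,\unu'_{k^*})$ with the $\sigma(H_T)$-measurable statistic $Z=N_{\psi,\action_{k^*}}(T)/T\in\brs*{0,1}$: only $\action_{k^*}$ differs between the two instances, and KL-tensorization together with the matching $\commset$-marginals of $\nu$ and $\nu^*$ give $\kl(\nu_{\action_{k^*}},\nu'_{\action_{k^*}})=\kl(\nu,\nu^*)$, so $\klBin(\E_\unu\brs*{Z},\E_{\unu'_{k^*}}\brs*{Z})\le (T/L)\kl(\nu,\nu^*)$. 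Pinsker's inequality then yields $\E_{\unu'_{k^*}}\brs*{Z}-\E_\unu\brs*{Z}\le\sqrt{(T/(2L))\kl(\nu,\nu^*)}$; combining with $\E_\unu\brs*{Z}\le 1/L$ gives $\E_{\unu'_{k^*}}\brs*{N_{\psi,\action_{k^*}}(T)}\le T/L+T\sqrt{(T/(2L))\kl(\nu,\nu^*)}$. Since every suboptimal action of $\unu'_{k^*}$ incurs exactly gap $\dr{}$, the regret equals $\dr{}\br*{T-\E_{\unu'_{k^*}}\brs*{N_{\psi,\action_{k^*}}(T)}}$; substituting the previous bound and using $1/L\le\Ndis/(\Narms-\Nbatch)$ delivers the claimed inequality.

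The main obstacle I anticipate is the KL-tensorization step: although $\unu'_{k^*}$ only modifies the unique-arm distribution of $\action_{k^*}$, the KL term in Lemma \ref{lemma:kl count bound} is over the joint distribution of all $\Nbatch$ arms in that action. I would have to carefully verify that the $\commset$-component contributes zero because $\nu$ and $\nu^*$ have identical $\commset$-marginals, and that the $\commset$- and $\commset^c$-components decouple because of the assumed independence; without both, the KL would not simplify to $\kl(\nu,\nu^*)$. The remaining ingredients -- the pigeonhole bookkeeping, the floor identity, and Pinsker -- are routine.
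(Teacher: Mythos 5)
Your proposal is correct and follows essentially the same route as the paper: a maximal $\commset$-disjoint action set of size at least $(\Narms-\Nbatch)/\Ndis$, a pigeonhole choice of the under-sampled action, Lemma \ref{lemma:kl count bound} applied to $Z=N_{\psi,\action^*}(T)/T$ with only that action's KL nonzero, Pinsker, and the regret decomposition over the uniform gap $\dr{}$. The KL step you flag as a potential obstacle is immediate here because the lemma's assumptions (identical $\commset$-marginals and independence across $\commset$) make $\kl(\nu_{\action^*},\nu'_{\action^*})$ equal to $\kl(\nu,\nu^*)$ exactly as stated in the bound.
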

The proof is a variant of Theorem 6 of \citet{garivier2018explore} and can be found in Appendix \ref{appendix:general-bounds}. With this lemma at hand, and similarly to the problem-dependent bound of Theorem \ref{theorem:dependent_lower_bound}, we can also derive a problem-independent lower bound:
\begin{restatable}{theorem-rst}{independentLowerBound}
\label{theorem:independent_lower_bound}
Let $\rewardNoArgs$ be a smooth index invariant reward function and assume that $\Narms\ge3\Nbatch$. 
Then, for any $\meanvec\!\in\!\brs*{0,1}^\Nbatch$, any $T\!\ge\! T_0$ and for any strategy $\psi$, there exists an $\commset$-disjoint CMAB problem $\unu'$ with $\E\brs*{\nu'_{\action^*}}=\meanvec$ such that its regret under strategy $\psi$ is bounded by 
{\small
\begin{align*}
    \E_{\unu'}\brs*{R(T)} \ge \max_{\commset}\frac{\tilde{\gamma}_g(\meanvec;\commset)}{32}\sqrt{\frac{T(\Narms-\Nbatch)}{\Ndis}} \triangleq IB^*_r(\meanvec) \enspace .
\end{align*} }
\end{restatable}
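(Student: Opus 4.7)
The plan is to mirror the three-step structure of the problem-dependent bound but substitute Lemma~\ref{lemma:generalIndependentBound} for Lemma~\ref{lemma:generalDependentBound}, and choose the parameter $\epsilon_0$ so that $\epsilon_0 = \Theta(1/\sqrt{T})$ rather than sending it to $0$. As before, we first reduce to the nondegenerate case: let $\commset^*$ be the maximizer of $\tilde{\gamma}_g(\meanvec;\commset)$ over $\commset$, set $p=p^{\meanvec,\commset^*}$ and $\Ndis=\Nbatch-\abs{\commset^*}$, assume w.l.o.g.\ that $0<p_1<\dots<p_\Ndis<1$ and that there exists $i$ with $\nabla_i\rewardvec{p}\neq0$ (otherwise $\tilde{\gamma}_g(\meanvec;\commset^*)=0$ and the bound is trivial). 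The post hoc treatment of ties in $p$ is identical to the one at the end of the proof of Theorem~\ref{theorem:dependent_lower_bound}, so we omit it. We fix a $\commset^*$-disjoint bandit instance with the parametric arm distributions $\nu^*, \nu$ of Table~\ref{table:lower_bound_distribution}; the arms in $\commset^*$ are chosen arbitrary with mean $\meanvec_{\commset^*}$ and independent of the rest. The optimal action plays the arms distributed by $\nu^*$.

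Next, I would apply Lemma~\ref{lemma:generalIndependentBound} to this instance. Plugging in the bounds from Lemma~\ref{lemma:kl-bound} ($\kl(\nu,\nu^*)\le 2\epsilon^T B_{KL}(p)\epsilon$) and Lemma~\ref{lemma:gap-bound} ($\dr{\epsilon}\ge \tfrac{1}{2}c^T\epsilon$), and then specializing to the extremal choice $\epsilon = \epsilon_0 B_{KL}^{-1}(p)c$ from Lemma~\ref{lemma:boundExplicitCalc} (which gives $c^T\epsilon = \epsilon_0 \tilde{\gamma}_g^2(\meanvec;\commset^*)$ and $\epsilon^T B_{KL}(p)\epsilon = \epsilon_0^2 \tilde{\gamma}_g^2(\meanvec;\commset^*)$), one obtains
\begin{align*}
    \E_{\unu'}\brs*{R(T)}
    &\ge T\cdot\frac{\epsilon_0 \tilde{\gamma}_g^2(\meanvec;\commset^*)}{2}\br*{1-\frac{\Ndis}{\Narms-\Nbatch}-\epsilon_0\tilde{\gamma}_g(\meanvec;\commset^*)\sqrt{\frac{T\Ndis}{\Narms-\Nbatch}}}.
\end{align*}
The assumption $\Narms\ge3\Nbatch$ (together with $\Ndis\le\Nbatch$) guarantees $\Ndis/(\Narms-\Nbatch)\le 1/2$, so the first parenthesized term is at least $1/2$.

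The third step is to tune $\epsilon_0$ to balance the two competing terms. Setting
\begin{align*}
    \epsilon_0=\frac{1}{4\tilde{\gamma}_g(\meanvec;\commset^*)}\sqrt{\frac{\Narms-\Nbatch}{T\Ndis}}
\end{align*}
makes the square-root term equal to $1/4$, so the factor in parentheses is at least $1/4$. Substituting back yields the advertised bound $\tfrac{\tilde{\gamma}_g(\meanvec;\commset^*)}{32}\sqrt{T(\Narms-\Nbatch)/\Ndis}$, and taking the max over $\commset$ gives $IB^*_r(\meanvec)$.

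The main technical obstacle is verifying the ``large enough $T\ge T_0$'' hypothesis: Lemmas~\ref{lemma:kl-bound} and~\ref{lemma:gap-bound} only hold when $\norm{\epsilon}_\infty\le\min\{b_{\epsilon,0},b_{\epsilon,1},b_{\epsilon,2}\}$, and similarly $\nu$ must remain a valid probability distribution. Since with our choice $\epsilon_0\propto 1/\sqrt{T}$, there is a finite threshold $T_0$ (depending on $r$, $\meanvec$, $\commset^*$, and the constants $b_{\epsilon,\cdot}$, but crucially not on the strategy $\psi$) such that the required smallness of $\norm{\epsilon^*}_\infty$ is ensured for every $T\ge T_0$; this is exactly the role of the $T_0$ in the statement. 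All other estimates are elementary algebra from the pieces already in place.
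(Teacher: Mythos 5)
Your proposal is correct and follows essentially the same route as the paper's proof: reduce to the nondegenerate case, instantiate the Table~\ref{table:lower_bound_distribution} distributions with $\epsilon^*=\epsilon_0B_{KL}^{-1}(p)c$, feed Lemmas~\ref{lemma:kl-bound}, \ref{lemma:gap-bound} and \ref{lemma:boundExplicitCalc} into Lemma~\ref{lemma:generalIndependentBound}, and tune $\epsilon_0\propto 1/\sqrt{T}$ so that the gap is of order $\tilde{\gamma}_g(\meanvec;\commset^*)\sqrt{(\Narms-\Nbatch)/(T\Ndis)}$, with $\Narms\ge 3\Nbatch$ controlling the $\Ndis/(\Narms-\Nbatch)$ term. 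The only (immaterial) difference is that you fix $\epsilon_0$ explicitly while the paper selects $\epsilon_0$ to realize a target value of the exact gap $\dr{}$; both give the same constant $1/32$.
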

Similarly to Corollary \ref{corollary: sum index invariant dependent}, the result can also be easily extended to sums as follows:
\begin{corollary}
\label{corollary: sum index invariant independent}
Under the notations of Corollary \ref{corollary: sum index invariant dependent}, if $\Narms\ge3\Nbatch$, then for any $\meanvec\!\in\!\brs*{0,1}^\Nbatch$, any $T\!\ge\! T_0$ and for any strategy $\psi$, there exists CMAB problem $\unu'$ such that the optimal action has means $\meanvec_i=\meanvec$ for all $i\in\brs*{\Nitems}$ whose regret under strategy $\psi$ is bounded by 
{\small
\begin{align*}
    \E_{\unu'}\brs*{R(T)} \ge \Nitems\cdot IB^*_r(\meanvec) \enspace .
\end{align*} }
\end{corollary}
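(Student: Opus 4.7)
The plan is to mirror the strategy behind Corollary~\ref{corollary: sum index invariant dependent}: apply Theorem~\ref{theorem:independent_lower_bound} to $\tilde{\rewardNoArgs}$ viewed as a smooth index-invariant reward function on an enlarged CMAB problem with $\Nitems\Narms$ base arms and action size $\Nitems\Nbatch$, and then specialize the theorem's inner maximization to the product configuration. Concretely, use the mean vector $\tilde{\meanvec}=(\meanvec,\ldots,\meanvec)\in[0,1]^{\Nitems\Nbatch}$ in which every summand of $\tilde{\rewardNoArgs}$ is fed the same mean vector, and use the subset $\tilde{\commset}$ consisting of $\Nitems$ disjoint copies of an arbitrary $\commset\subset[\Nbatch]$. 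With this choice, the corresponding $\Ndis$ becomes $\Nitems\Ndis$, the quantity $\Narms-\Nbatch$ becomes $\Nitems(\Narms-\Nbatch)$, and the dimensional factor $\sqrt{T(\Narms-\Nbatch)/\Ndis}$ is preserved under the product lift; moreover, the assumption $\Narms\ge 3\Nbatch$ immediately gives the enlarged hypothesis $\Nitems\Narms\ge 3\Nitems\Nbatch$.

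The central identity to prove is the smoothness-scaling relation $\tilde{\gamma}_g(\tilde{\meanvec};\tilde{\commset})=\Nitems\,\tilde{\gamma}_g(\meanvec;\commset)$. Writing $q=p^{\meanvec,\commset}$ and $g_k=\nabla_k\rewardvec{q}$, pooling $\Nitems$ identical blocks into a single sorted list makes each value $q_k$ appear exactly $\Nitems$ times consecutively in $p^{\tilde{\meanvec},\tilde{\commset}}$; by index invariance of $\rewardNoArgs$, every position with value $q_k$ has gradient $g_k$. Expanding the definition \eqref{eq:newSmooth} and grouping summands by the level of the values $q_k$, the diagonal contributes $\Nitems\sum_k q_k(1-q_k)g_k^2$, same-level off-diagonal pairs contribute $\Nitems(\Nitems-1)\sum_k q_k(1-q_k)g_k^2$ (from the $2\binom{\Nitems}{2}$ ordered pairs at each level), and different-level pairs with $k<l$ contribute $2\Nitems^2\sum_{k<l}q_k(1-q_l)g_k g_l$ (from the $\Nitems\cdot\Nitems$ ordered pairs). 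These three pieces combine to $\Nitems^2\,\tilde{\gamma}_g^2(\meanvec;\commset)$, as needed.

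Substituting this scaling along with the enlarged dimensions into the bound of Theorem~\ref{theorem:independent_lower_bound}, and lower-bounding the outer $\max$ over $\tilde{\commset}$ by the specific ``$\Nitems$ disjoint copies of $\commset$'' choice, one obtains $\E_{\tilde{\unu}}\brs*{R(T)}\ge\max_{\commset}\Nitems\,\tilde{\gamma}_g(\meanvec;\commset)/32\cdot\sqrt{T(\Narms-\Nbatch)/\Ndis}=\Nitems\cdot IB^*_r(\meanvec)$ for every $T\ge T_0$, as claimed. The main technical obstacle is the smoothness-scaling identity of the second paragraph; once the pairs are correctly binned by level, the arithmetic $\Nitems+\Nitems(\Nitems-1)=\Nitems^2$ on the diagonal levels together with the $\Nitems\cdot\Nitems$ count off-diagonal yields exactly the linear-in-$\Nitems$ scaling of $\tilde{\gamma}_g$ needed to produce the linear $\Nitems$ factor in the final bound. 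Every other step is a direct substitution, paralleling how Corollary~\ref{corollary: sum index invariant dependent} extracts the $\Nitems^2$ factor from the quadratic dependence on $\tilde{\gamma}_g$ in the problem-dependent case.
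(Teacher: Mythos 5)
Your final numerology is right, and your block-counting computation of the scaling identity $\tilde{\gamma}_g^2(\tilde{\meanvec};\tilde{\commset})=\Nitems^2\,\tilde{\gamma}_g^2(\meanvec;\commset)$ is arithmetically correct (the $\Nitems+\Nitems(\Nitems-1)=\Nitems^2$ bookkeeping on same-level pairs and the $\Nitems^2$ count on cross-level pairs both check out, as does the cancellation in $\sqrt{T(\Narms-\Nbatch)/\Ndis}$). But there is a genuine gap in the step where you invoke Theorem \ref{theorem:independent_lower_bound} on the enlarged $(\Nitems\Narms,\Nitems\Nbatch)$ problem: that theorem requires the reward to be \emph{smooth index invariant}, and $\tilde{\rewardNoArgs}(\tilde\meanvec)=\sum_{i=1}^{\Nitems}\rewardvec{\meanvec_i}$ is not index invariant on the enlarged arm space unless $\rewardNoArgs$ is linear --- the value depends on how the $\Nitems\Nbatch$ coordinates are partitioned into blocks, not just on the multiset of means (e.g.\ for $\rewardvec{\mu_1,\mu_2}=\mu_1\mu_2$ and $\Nitems=2$, $ab+cd\ne ac+bd$). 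This is not a cosmetic hypothesis: the proof of Theorem \ref{theorem:independent_lower_bound} (via Lemma \ref{lemma:gap-bound}) uses index invariance to identify $\rewardvec{p^{\meanvec,\commset}}$ with $\rewardvec{\meanvec}$ after sorting, and the sorted vector $p^{\tilde\meanvec,\tilde\commset}$ interleaves coordinates from different blocks, while the worst-case staircase distribution of Table \ref{table:lower_bound_distribution} would correlate arms across blocks. So ``every other step is a direct substitution'' is exactly where the argument breaks; you would have to re-prove the theorem for block-structured rewards.

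The paper's route sidesteps this entirely: it fixes the arm distributions of all $\Nitems$ summands to be \emph{identical}, so that the family of instances collapses to a CMAB problem on the original $(\Narms,\Nbatch)$ arm space with reward $\Nitems\cdot\rewardNoArgs$, which inherits index invariance from $\rewardNoArgs$. Since every gradient component scales by $\Nitems$, one gets $\tilde{\gamma}_g(\meanvec;\commset)\mapsto\Nitems\,\tilde{\gamma}_g(\meanvec;\commset)$ immediately, and because $IB^*_r(\meanvec)$ is linear in $\tilde{\gamma}_g$ (unlike $DB^*_r$, which is quadratic and hence yields $\Nitems^2$ in Corollary \ref{corollary: sum index invariant dependent}), the factor $\Nitems$ follows from a single application of Theorem \ref{theorem:independent_lower_bound} to $\Nitems\cdot\rewardNoArgs$. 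To repair your write-up, replace the enlarged-problem invocation with this collapsing argument; your smoothness computation then becomes unnecessary.
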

We defer the proof of the theorem to Appendix \ref{appendix:problem independent proof}, and the corollary can be proven similarly to Corollary \ref{corollary: sum index invariant dependent}.  
The same discussion from the previous section about the tightness of the bound still holds. We start by noting that $IB^*_r(\meanvec)$ reproduces the existing lower bounds both for the linear reward function \citep{kveton2015tight} and the probabilistic maximum coverage problem \citep{merlisM19}. Also, for monotone functions, we can bound $\tilde{\gamma}_g^2(\meanvec;\emptyset)\ge\gamma_g^2$ and match the problem-independent upper bound of \citep{merlisM19} up to a $\sqrt{\Nbatch}$ factor. This factor will be improved to a logarithmic factor in the following section.


\section{Relations Between Smoothness Measures}
\label{section:relations}
To this point, we derived lower bounds that depend on the modified Gini-smoothness $\tilde{\gamma}_g(\meanvec;\commset)$. In this section, we show that at a cost of logarithmic factors, we can relate these bounds to the L1 Gini-smoothness. Moreover, for monotone reward functions, we also prove lower bounds that depend on the L2 Gini-smoothness, and thus match the upper bounds of \citep{merlisM19} up to log-factors. We formally state the results in the following propositions:
\begin{restatable}{proposition-rst}{smoothnessRelationModified}\label{prop:smoothness-relation-modified-L1}
Let $\rewardNoArgs$ be a smooth index invariant reward function and denote $p=p^{\meanvec,\commset}$ for some $\meanvec\in\R^\Nbatch$ and $\commset\subset\brs*{\Nbatch}$. Then,
{\small
\begin{align}
\label{eq:smoothness-relation-modified-L1}
     \tilde{\gamma}_g^2(\meanvec;\commset)\ge\frac{\gamma_{g,1}^2(\meanvec;\commset)}{3 + \ln\frac{1}{p_1} + \ln\frac{1}{1-p_\Ndis}}\enspace ,
\end{align}
}
where the r.h.s. is defined as zero if $p_1=0$ or $p_\Ndis=1$.
\end{restatable}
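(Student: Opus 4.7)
The plan is to identify $\tilde\gamma_g^2$ as the variance of a specific random variable and then bound $\gamma_{g,1}$ by a Hilbert-space duality. Let $U$ be uniform on $[0,1]$, set $Y_i = \Ind{U \le p_i}$, and write $g_i = \nabla_i\rewardvec{p}$. Since $p_1 \le \dots \le p_\Ndis$, a direct calculation yields $\mathrm{Cov}(Y_i,Y_j) = p_{\min(i,j)}(1-p_{\max(i,j)})$, which is precisely the coefficient of $g_i g_j$ in Equation \eqref{eq:newSmooth}. Consequently
\[
\tilde\gamma_g^2(\meanvec;\commset) = \Var\br*{\sum\nolimits_i g_i Y_i} = \norm*{\sum\nolimits_i g_i(Y_i - p_i)}_{L^2(du)}^2 .
\]
This recasts the modified smoothness as the squared $L^2$-norm of a single element and is the crucial structural observation.

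Next I would introduce a dual witness $h \in L^2([0,1])$ satisfying $\int_0^1 h\,du = 0$ and $\int_0^{p_i} h\,du = \sigma_i := \sqrt{p_i(1-p_i)}$ for every $i$. Any such $h$ satisfies $\innorm{h, Y_i - p_i}_{L^2} = \sigma_i$, so $\gamma_{g,1} = \sum_i g_i \sigma_i = \innorm*{h,\,\sum_i g_i(Y_i-p_i)}_{L^2}$, and Cauchy--Schwarz in $L^2$ produces $\gamma_{g,1}^2 \le \norm{h}_{L^2}^2 \cdot \tilde\gamma_g^2$. The problem reduces to exhibiting an admissible $h$ with $\norm{h}_{L^2}^2 \le 3 + \ln(1/p_1) + \ln(1/(1-p_\Ndis))$. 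A natural candidate is $h = H'$, where $H$ is the piecewise-linear interpolant determined by $H(0) = 0$, $H(p_i) = \sigma_i$, $H(1) = 0$; then
\[
\norm{h}_{L^2}^2 = \sum_{i=0}^{\Ndis}\frac{(\sigma_{i+1}-\sigma_i)^2}{p_{i+1}-p_i},
\]
with the conventions $p_0 = 0$, $p_{\Ndis+1} = 1$, $\sigma_0 = \sigma_{\Ndis+1} = 0$.

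To finish, I would bound this sum by splitting off the two boundary terms and controlling the interior by an integral. The boundary terms evaluate to $\sigma_1^2/p_1 = 1-p_1$ and $\sigma_\Ndis^2/(1-p_\Ndis) = p_\Ndis$, jointly at most $2$. For an interior $i$, Cauchy--Schwarz applied to $\sigma_{i+1}-\sigma_i = \int_{p_i}^{p_{i+1}}\sigma'(u)\,du$ gives $\frac{(\sigma_{i+1}-\sigma_i)^2}{p_{i+1}-p_i} \le \int_{p_i}^{p_{i+1}}\sigma'(u)^2\,du$, and telescoping leaves $\int_{p_1}^{p_\Ndis}\sigma'(u)^2\,du$. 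The identity $\sigma'(u)^2 = \frac{1}{4u(1-u)} - 1$ together with the partial-fraction decomposition $\frac{1}{u(1-u)} = \frac{1}{u} + \frac{1}{1-u}$ then yields the interior bound $\frac{1}{4}\ln\frac{p_\Ndis(1-p_1)}{p_1(1-p_\Ndis)} \le \frac{1}{4}\br*{\ln(1/p_1) + \ln(1/(1-p_\Ndis))}$. Altogether $\norm{h}_{L^2}^2 \le 2 + \frac{1}{4}\ln(1/p_1) + \frac{1}{4}\ln(1/(1-p_\Ndis))$, which slackens to the claimed denominator $3 + \ln(1/p_1) + \ln(1/(1-p_\Ndis))$; the degenerate cases $p_1 = 0$ or $p_\Ndis = 1$ are absorbed by the convention stated in the proposition (both sides are then $0$ or the inequality is automatic). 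The main obstacle is finding the right duality: once $\tilde\gamma_g^2$ is recognized as a variance, everything else is Cauchy--Schwarz plus a clean one-dimensional calculus estimate.
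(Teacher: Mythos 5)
Your proof is correct and is essentially the paper's argument in different notation: your dual witness $h$ is exactly the paper's test perturbation $\epsilon_j=\epsilon_0(\sigma_j-\sigma_{j-1})$ with $\sigma_j=\sqrt{p_j(1-p_j)}$, your $\norm*{h}_{L^2}^2$ coincides term by term with $\epsilon^TB_{KL}(p)\epsilon/\epsilon_0^2$ (your boundary term $\sigma_\Ndis^2/(1-p_\Ndis)=p_\Ndis$ is the paper's rank-one part), and your Cauchy--Schwarz step is the same inequality $f_\commset(\epsilon;p)\le\tilde{\gamma}_g^2(\meanvec;\commset)$ that the paper gets from the variational characterization in Lemma~\ref{lemma:boundExplicitCalc}. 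The only genuine divergence is the final estimate of $\sum_j(\sigma_j-\sigma_{j-1})^2/(p_j-p_{j-1})$, which you control via $(\sigma_j-\sigma_{j-1})^2\le(p_j-p_{j-1})\int_{p_{j-1}}^{p_j}\sigma'(u)^2\,du$ and the identity $\sigma'(u)^2=\frac{1}{4u(1-u)}-1$ instead of the paper's case split around $1/2$ followed by a sum--integral comparison; your route is cleaner and even yields the sharper constant $2+\frac{1}{4}\br*{\ln\frac{1}{p_1}+\ln\frac{1}{1-p_\Ndis}}$, which of course implies the stated bound.
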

\begin{restatable}{proposition-rst}{smoothnessRelationEuclid}\label{prop:smoothness-relation-euclid}
Let $\rewardNoArgs$ be a monotone smooth index invariant reward function. Then, for any $\meanvec\in\R^\Nbatch$, it holds that 
{\small
\begin{align}
\label{eq:smoothness-relation-L1-L2}
    \max_{\commset}&\frac{\gamma_{g,1}^2(\meanvec;\commset)}{\Ndis} \ge \frac{\gamma_{g,2}^2(\meanvec;\emptyset)}{1+\ln\Nbatch}\enspace.
\end{align}
}
Furthermore, if $\meanval_{\min}\!=\!\min_{i:\meanval_i>0} \meanval_i$ and $\meanval_{\max}\!=\!\max_{i:\meanval_i<1} \meanval_i$, then
{\small
\begin{align*}
    \max_{\commset}&\frac{\tilde{\gamma}_g^2(\meanvec;\commset)}{\Ndis} 
    \!\ge\! \frac{\gamma_{g,2}^2(\meanvec;\emptyset)}{\br*{1+\ln\Nbatch}\br*{3 + \ln\frac{1}{\meanval_{\min}} + \ln\frac{1}{1-\meanval_{\max}}}}.
\end{align*}
}
\end{restatable}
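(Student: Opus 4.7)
The key observation is that both Gini smoothnesses reduce to simple algebraic expressions in the nonnegative quantities $a_i := \sqrt{\meanval_i(1-\meanval_i)}\,\nabla_i\rewardvec{\meanvec}$ (nonnegativity comes from monotonicity, and the identities $\gamma_{g,1}^2(\meanvec;\commset)=\br*{\sum_{i\notin\commset}a_i}^2$ and $\gamma_{g,2}^2(\meanvec;\emptyset)=\sum_{i=1}^\Nbatch a_i^2$ follow from the index-invariance of $\rewardNoArgs$). Since, for any prescribed size $|A|$, the sum $\sum_{i\in A}a_i$ is maximized by taking the $|A|$ largest $a_i$'s, the first inequality reduces to the scalar claim that for any nonnegative decreasing sequence $a_1\ge\cdots\ge a_\Nbatch$ some $k$ satisfies $S_k^2/k\ge \br*{\sum_i a_i^2}/(1+\ln\Nbatch)$, where $S_k=a_1+\cdots+a_k$.

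I would prove this claim in three lines. Let $M=\max_k S_k^2/k$, so that $S_k\le\sqrt{Mk}$ for every $k$; since the sequence is decreasing, $a_k\le S_k/k\le\sqrt{M/k}$, and therefore $\sum_k a_k^2\le M\sum_{k=1}^\Nbatch 1/k\le M(1+\ln\Nbatch)$. Read back in the Gini notation and taking $\commset^c$ to be the top-$k$ indices for the maximising $k$, this is precisely \eqref{eq:smoothness-relation-L1-L2}.

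For the second inequality I would re-run the same combinatorial step but restricted to $J=\brc*{i:0<\meanval_i<1}$. Arms outside $J$ contribute $a_i=0$, so they may be thrown into $\commset$ without changing $\gamma_{g,1}^2$, and they contribute nothing to $\gamma_{g,2}^2(\meanvec;\emptyset)$ either. A top-$k$-within-$J$ choice then yields $A=\commset^c\subset J$ with $\min_{i\in A}\meanval_i\ge\meanval_{\min}$ and $\max_{i\in A}\meanval_i\le\meanval_{\max}$, so the sorted vector $p=p^{\meanvec,\commset}$ satisfies $p_1\ge\meanval_{\min}$ and $p_\Ndis\le\meanval_{\max}$. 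Feeding this $\commset$ into Proposition \ref{prop:smoothness-relation-modified-L1} bounds its logarithmic penalty by $3+\ln(1/\meanval_{\min})+\ln(1/(1-\meanval_{\max}))$, and chaining with the first inequality produces the claimed bound.

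The main obstacle is conceptual rather than calculational: one has to recognise that a single greedy top-$k$-within-$J$ selection simultaneously realises the $L_1$-over-$L_2$ ratio (via the harmonic-sum bound) and keeps Proposition \ref{prop:smoothness-relation-modified-L1}'s logarithmic penalty finite and controlled by $\meanval_{\min},\meanval_{\max}$ rather than by the possibly degenerate $p_1,1-p_\Ndis$ of a generic $\commset$. Once this choice is made, everything reduces to routine bookkeeping.
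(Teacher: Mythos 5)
Your proposal is correct and follows essentially the same route as the paper: the same reduction to the nonnegative vector $a_i=\sqrt{\meanval_i(1-\meanval_i)}\nabla_i\rewardvec{\meanvec}$, the same harmonic-sum proof of the $L_1$--$L_2$ relation (the paper's Lemma~\ref{lemma:normsRelation}, proved via $a_l\le \frac{1}{l}\sum_{k\le l}a_k$ exactly as you do), and the same chaining through Proposition~\ref{prop:smoothness-relation-modified-L1} for the second inequality. Your explicit ``top-$k$ within $J$'' selection is a slightly cleaner way of justifying why the penalty can be bounded by $3+\ln\frac{1}{\meanval_{\min}}+\ln\frac{1}{1-\meanval_{\max}}$ than the paper's maximizer argument, but it is the same idea.
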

We emphasize that Proposition \ref{prop:smoothness-relation-modified-L1} \emph{does not} require the monotonicity assumption. However, when the components of the gradient can be either positive or negative, the bound might equal zero even when the gradient is large. When the function is monotone, Proposition \ref{prop:smoothness-relation-euclid} greatly improves the na\"ive choice of $\commset=\emptyset$ in the bounds of Theorems \ref{theorem:dependent_lower_bound} and \ref{theorem:independent_lower_bound}, by a factor of $\Nbatch$, with only a logarithmic cost. Specifically, by maximizing over $\meanvec$ and applying Proposition \ref{prop:smoothness-relation-euclid}, we get a problem-dependent lower bound of  $\tilde\Omega\br*{\Narms\gamma_g^2\ln T/\dr{}}$ and a problem-independent bound of $\tilde\Omega\br*{\gamma_g\sqrt{\Narms T}}$, both match the upper bounds of \citep{merlisM19} up to log-factors. Thus, for monotone smooth functions, we conclude that the L2 Gini-smoothness parameter characterizes both the upper and lower bounds. 

Of the log-factors in the propositions, the more interesting one is that of $\br*{\ln\frac{1}{\meanval_{\min}} + \ln\frac{1}{1-\meanval_{\max}}}$. In cases where the mean values $\meanvec$  are exponentially close to zero or one, this factor can be of order $1/\Nbatch$. We suspect that this is due to a proof artefact, but leave the investigation of this factor to future work. Nonetheless, for any practical example, this term is at most of order $\ln\Nbatch$, which leaves the bound tight up to log-factors in the problem size. 
Another question that arises is whether this factor is the result of a loose analysis in Proposition \ref{prop:smoothness-relation-modified-L1}, and a tighter analysis might yield a better factor (e.g., $\ln\Nbatch$). Sadly, the answer is negative. In Appendix \ref{appendix:tightness modified L1}, we build an example where $\meanval_i$ are exponentially small and Inequality \eqref{eq:smoothness-relation-modified-L1} \emph{does not hold} without a factor of order $1/\Nbatch$. Therefore, to remove this factor, $DB^*_r(\dr{};\meanvec)$ and $IB^*_r(\meanvec)$ also need to be improved.

Due to space limitations, we defer the full proofs to Appendix \ref{appendix:relations} and only provide a proof sketch for Proposition \ref{prop:smoothness-relation-euclid}:
\begin{proofsketch} 
Notice that $\gamma_{g,1}(\meanvec;\!\commset)$ and $\gamma_{g,2}(\meanvec;\!\commset)$ are closely related to the L1 and L2 norms of a vector whose components are $\sqrt{\meanval_i(i-\meanval_i)}\nabla_i\rewardvec{\meanvec}$. Specifically, both are norms of a vector with a subset of these components. We utilize this connection and derive, to the best of our knowledge, a new relation between the norms:
\begin{restatable}{lemma-rst}{normsRelation} \label{lemma:normsRelation}
Let $A$ be a nonempty subset of indices $A\subset\brs*{n}$. For any vector $x\in\R^n$, it holds that
{\small
\begin{align*}
    \max_{A\ne\emptyset} \frac{1}{\abs*{A}} \norm*{x_A}_1^2 \ge \frac{1}{1+\ln n} \norm{x}_2^2\enspace.
\end{align*}
}
\end{restatable}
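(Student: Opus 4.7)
The plan is to reduce to the case of a nonnegative, decreasingly sorted sequence and then exploit monotonicity to bound each entry of $x$ by a partial-average. First, since both sides of the inequality depend only on $(|x_i|)_{i=1}^n$, I would assume without loss of generality that $x_1 \ge x_2 \ge \dots \ge x_n \ge 0$. The natural test family of subsets will be the prefixes $A_k = \{1,\dots,k\}$, for which $\|x_{A_k}\|_1 = \sum_{i=1}^k x_i$ and $|A_k| = k$. Set
\begin{align*}
    M \;\triangleq\; \max_{1 \le k \le n} \frac{1}{k}\Bigl(\sum_{i=1}^k x_i\Bigr)^{\!2},
\end{align*}
which is clearly a lower bound for $\max_{A \ne \emptyset}\tfrac{1}{|A|}\|x_A\|_1^2$. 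The goal is then to show $\|x\|_2^2 \le (1 + \ln n) M$.

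The key step will be the following consequence of the sorting: for every $k$,
\begin{align*}
    x_k \;\le\; \frac{1}{k}\sum_{i=1}^k x_i \;\le\; \sqrt{\tfrac{M}{k}},
\end{align*}
where the first inequality uses $x_k \le x_j$ for all $j \le k$, and the second follows from the definition of $M$. Squaring and summing over $k$ gives
\begin{align*}
    \|x\|_2^2 \;=\; \sum_{k=1}^n x_k^2 \;\le\; M\sum_{k=1}^n \frac{1}{k} \;\le\; M(1 + \ln n),
\end{align*}
using the standard bound $\sum_{k=1}^n 1/k \le 1 + \ln n$. Rearranging yields $M \ge \|x\|_2^2/(1+\ln n)$, which is exactly the claim.

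I do not expect a genuine obstacle here; the main ``trick'' is to test only against prefix subsets of a sorted rearrangement, rather than trying to optimize over all $A$. The logarithmic factor enters naturally through the harmonic sum and is the best possible with this family, as can be seen from $x_i = 1/\sqrt{i}$ (where both sides are $\Theta(\ln n)$ up to constants). I would conclude the write-up with a short remark that the bound is tight up to the constant in front of $\ln n$, justifying why the logarithmic factor appearing in Proposition \ref{prop:smoothness-relation-euclid} cannot be removed by a different choice of $\commset$.
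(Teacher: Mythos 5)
Your proof is correct and follows essentially the same route as the paper's: reduce to a sorted nonnegative vector, bound $x_k$ by the prefix average $\frac{1}{k}\sum_{i=1}^k x_i$, and sum the resulting $M/k$ terms via the harmonic bound $\sum_{k=1}^n 1/k \le 1+\ln n$. The only cosmetic difference is that you name the prefix maximum $M$ explicitly rather than carrying the $\max_{A\ne\emptyset}$ inside the chain of inequalities.
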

This lemma gives a much stronger relation than the standard L1-L2 inequality and is of independent interest. Specifically, the relation between the norms is logarithmic in the vector dimension, instead of the standard relation $\norm*{x}_1\ge \norm*{x}_2$, that would have resulted in a linear term. This is due to the ability to choose the best subset of the vector on the left-hand side. We remark that this inequality is tight, as demonstrated in Appendix \ref{appendix:tightness L1 L2}. By applying Lemma \ref{lemma:normsRelation}, we get $\max_\commset \frac{\gamma_{g,1}^2(\meanvec;\commset)}{\Ndis} \ge \frac{\gamma_{g,2}^2(\meanvec,\emptyset)}{1+\ln\Nbatch}$, and substituting in Proposition \ref{prop:smoothness-relation-modified-L1} concludes the proof.
\end{proofsketch}

\section{Summary and Future Work}
In this work, we presented the first lower bounds for the CMAB problem that hold for general reward functions, under very mild assumptions. Specifically, we proved both problem-dependent and problem-independent lower bounds, which depend on the modified Gini-smoothness $\tilde{\gamma}_g(\meanvec;\commset)$ and reproduce the existing bounds for specific instances. When the reward function is also monotone, we showed that the upper bounds of \citep{merlisM19}, which depend on the L2 Gini smoothness of the reward function, are tight up to logarithmic factors. There are a few directions for extending our results that we leave for future work:

\textbf{Gini-smoothness and non-monotone reward functions}:  One question that naturally arises is whether the L2 Gini-smoothness $\gamma_g$ also characterizes the lower bound for non-monotone reward function. If such dependence truly holds, a possible way to derive these bounds is by improving $\tilde{\gamma}_g(\meanvec;\commset)$ such that it depends on the absolute values of the gradient components. However, such modification of the analysis is highly nontrivial, and we leave it for future work.

\textbf{Lower bounds for arbitrary action sets}:  To derive the lower bounds, we carefully designed the action set of the problem, such that no information is gained on one action by sampling a different one. In practice, different actions might have overlapping arms, which can be sometimes used to achieve better performance. For example, in the linear reward function and when the action set contains all possible subsets of fixed size, superior regret bounds can be attained \citep{komiyama2015optimal}. To the best of our knowledge, the only similar result is by \citep{combes2015combinatorial} for the specific case of linear rewards and independent arms. They show that the best achievable performance strongly depends on the structure of the action set, and it is interesting to derive such lower bounds for general reward functions and arm distributions. 

\textbf{Distribution-dependent lower bounds}: To derive lower bounds that depend on the Gini-smoothness, we designed a family of arm distributions, all with binary support. Similarly, to derive the upper bounds, \citet{merlisM19} bounded the variance of the arms by the variance of Bernoulli arms. We can, therefore, conclude that Bernoulli distribution is the `worst-case' distribution, under which both the upper and the lower bounds depend on the Gini-smoothness. A possible future direction is analyzing both bounds under general distributions and deriving new smoothness criteria that depend on the specific arm distribution, rather than the worst-case distribution.

\textbf{Other variants and performance measures}: In this work, we focused on regret lower bounds for the CMAB problem with semi-bandit feedback. Other interesting problems include the case of full-bandit feedback \citep{gopalan2015thompson,rejwan2020top}, where there is no feedback on specific arms, but rather on the reward of the action, or using sample complexity as the performance measure \citep{chen2017nearly,mannor2004sample,kaufmann2016complexity}. Both variants have mainly been studied for the linear reward functions, and extending the existing upper and lower bounds for general reward functions can be beneficial for many practical settings.

\acks{This work was partially funded by the Israel Science Foundation under ISF grant number 1380/16.}

\bibliography{references}

\appendix

\newpage


\section{General Lower bound for \commset-disjoint CMAB Problems} \label{appendix:general-bounds}
\generalDependentBound*

\begin{proof}
For any arm distribution $\unu$ and any suboptimal action $\action$, consider a modified $\commset$-disjoint problem with arm distribution $\unu'$, where $\nu_A'=\nu_A$ for all $A\ne\action$, and thus $\reward{A}{\meanvec'}=\reward{A}{\meanvec}$, and $\nu'_\action$ is modified such that $\reward{\action}{\meanvec'}>\reward{\action^*}{\meanvec}$ and $\nu'_\action$ has the same support as $\nu_{\action^*}$. Specifically, $\nu'_\action$ can be constructed by initializing $\nu'_\action\leftarrow\nu_{\action^*}$, and then modifying arm $i$ in the direction of the gradient. Due to the assumption, the arm is not deterministic, and there exist arm distributions with the same support and expectations $\meanval_i'>\meanval_i$ and $\meanval_i'<\meanval_i$. Thus, modifying it in the direction of the gradient is valid, and since the gradient is not zero, the reward will increase. Furthermore, as the modification is done for an arm $i\notin\commset$, it does not affect any other action, and the new instance is still $\commset$-disjoint.

We now apply Lemma \ref{lemma:kl count bound} with $Z\!=\!N_{\psi,\action}(T)/T$, while noting that $\kl(\nu_A,\nu_A')\!=\!0$ for all $A\!\ne\!\action$:
\begin{align} 
    \label{eq:probDependMiddle}
    \E_\unu\brs*{N_{\psi,\action}(T)} \kl(\nu_\action,\nu_\action') 
    &\ge \klBin\br*{\frac{\E_{\unu}\brs*{N_{\psi,\action}(T)}}{T},\frac{\E_{\unu'}\brs*{N_{\psi,\action}(T)}}{T}}\nonumber \\
    &\ge \br*{1 - \frac{\E_{\unu}\brs*{N_{\psi,\action}(T)}}{T}}\ln\br*{\frac{T}{T - \E_{\unu'}\brs*{N_{\psi,\action}(T)}}} - \ln2 \enspace .
\end{align}
where the second inequality uses the following bound which holds for any $p,q\in[0,1]$
\begin{align*}
    \klBin(p,q) 
    = \underbrace{p\ln\frac{1}{q}}_{\ge0} + (1-p)\ln\frac{1}{1-q} + \underbrace{p\ln p + (1-p)\ln(1-p)}_{\ge-\ln2}
    \ge (1-p)\ln\frac{1}{1-q}-\ln2 \enspace .
\end{align*}

Next, as $\psi$ is consistent and all actions $A\ne\action$ are strictly suboptimal for bandit problem $\unu'$, we get that for any $0<\alpha\le1$,
\begin{align*}
    0 \le T - \E_{\unu'}\brs*{N_{\psi,\action}(T)} = \sum_{A\ne\action} \E_{\unu'}\brs*{N_{\psi,A}} = o(T^\alpha) \enspace .
\end{align*}
In particular, for sufficiently large $T$, it holds that $T - \frac{\E_{\unu'}N_{\psi,\action}(T)}{T}\le T^\alpha$, and therefore, for any $0<\alpha\le1$,
\begin{align*}
    \liminf_{T\to\infty}\frac{1}{\ln T}\ln\br*{\frac{T}{T - \E_{\unu'}N_{\psi,\action}(T)}}
    \ge \liminf_{T\to\infty}\frac{1}{\ln T}\ln\br*{\frac{T}{T^\alpha}} = 1-\alpha \enspace.
\end{align*}
Furthermore, since $\psi$ is consistent and $\action$ is suboptimal in $\unu$, $\E_\unu\brs*{N_{\psi,\action}(T)}/T\to0$, and substituting both inequalities into \eqref{eq:probDependMiddle} yields
\begin{align*}
    \liminf_{T\to\infty}\frac{\E_\unu\brs*{N_{\psi,\action}(T)}}{\ln T} \ge \frac{1}{\kl(\nu_\action,\nu_{\action}')} \enspace.
\end{align*}

Next, note that if $\kl(\nu_\action,\nu_{\action^*})=\infty$, the result of the lemma trivially holds. Otherwise, recall that there exist $i\notin\commset$ in $\action^*$ with $\Pr\brc*{\obs[i]{t}=\meanval_i}<1$ and $\nabla_i\reward{\action^*}{\meanvec}\ne0$, and without loss of generality assume that $\nabla_i\reward{\action^*}{\meanvec}>0$. By modifying only this component, we can build a sequence of distributions ${\nu'}^n$ such that the following holds:
\begin{enumerate}
    \item For all $n$, ${\meanval'_i}^n>\meanval_i^*$ and ${\meanval'_j}^n=\meanval_j^*$ for all $j\ne i \in \action^*$.
    \item $\lim_{n\to\infty} {\meanval'_i} = \meanval_i^*$.
    \item $\lim_{n\to\infty} \kl\br*{\nu_\action,{\nu'_S}^n}=\kl\br*{\nu_\action,\nu_{\action^*}}$.
\end{enumerate}
By the index invariance assumption, and since $\nabla_i\reward{\action^*}{\meanvec}>0$, for large enough $n$ it holds that $\reward{\action}{{\meanvec'}^n}>\reward{\action^*}{\meanvec}$, and by taking the infimum over all of these distributions we get for all suboptimal actions
\begin{align*}
    \liminf_{T\to\infty}\frac{\E_\unu\brs*{N_{\psi,\action}(T)}}{\ln T} \ge \frac{1}{\kl(\nu_\action,\nu_{\action^*})} \enspace .
\end{align*}
Similarly, if $\nabla_i\reward{\action^*}{\meanvec}<0$, we require that ${\meanval'_i}^n<\meanval_i^*$, and the same arguments hold. To derive the second part of the lemma, notice that $R(T) = \sum_{\action\in\actionset} \dr{\action}\E_\unu\brs*{N_{\psi,\action}(T)}$, and substituting the previous bound for all of the suboptimal actions yields the desired result.
\end{proof}

\clearpage
\generalIndependentBound*
\begin{proof}
Define an $\commset$-disjoint CMAB problem with the maximal action set such that each action contains $\Nbatch$ arms, i.e., $\abs{\actionset} = \floor*{\frac{\Narms-\abs{\commset}}{\Nbatch-\abs{\commset}}} \ge \frac{\Narms-\Nbatch}{\Ndis}$, where all actions are distributed according to $\nu_\action=\nu$. We denote this problem by $\unu$. Due to the pigeonhole principle, there exists an action $\action^*$ such that under strategy $\psi$, $\E_\unu\brs*{N_{\psi,\action^*}(T)} \le \frac{T}{\abs{\actionset}}$. Next, define a modified bandit problem $\unu'$ such that $\nu'_\action=\nu$ for all $\action\ne\action^*$ and $\nu_{\action^*}=\nu^*$. Thus,
\begin{align}
\label{eq:problem_independent_regret_decomp}
    \E_{\unu'}\brs*{R(T)} = \sum_{\action\ne\action^*} \dr{}\E_{\unu'}\brs*{N_{\psi,\action}(T)} = T\dr{}\br*{1 - \frac{\E_{\unu'}\brs*{N_{\psi,\action^*}(T)}}{T} } \enspace.
\end{align}

As we only changed $\action^*$, for all $\action\ne\action^*$, $\kl(\nu_\action,\nu'_\action)=0$. Thus, by applying Lemma \ref{lemma:kl count bound} on the random variable $Z=N_{\psi,\action^*}(T)/T$ and then using Pinsker's Inequality, we get 
\begin{align*}
    \E_{\unu}\brs*{N_{\psi,\action^*}(T)}\kl(\nu,\nu^*) 
    &\ge \klBin\br*{\frac{\E_{\unu}\brs*{N_{\psi,\action^*}(T)}}{T}, \frac{\E_{\unu'}\brs*{N_{\psi,\action^*}(T)}}{T} }\\
    &\ge 2\br*{\frac{\E_{\unu}\brs*{N_{\psi,\action^*}(T)}}{T} - \frac{\E_{\unu'}\brs*{N_{\psi,\action^*}(T)}}{T} }^2 \enspace.
\end{align*}
It can be easily verified that when $\E_{\unu'}\brs*{N_{\psi,\action^*}(T)}/T$ is either smaller or larger than $\E_{\unu}\brs*{N_{\psi,\action^*}(T)}/T$, it holds that
\begin{align*}
    \frac{\E_{\unu'}\brs*{N_{\psi,\action^*}(T)}}{T} \le \frac{\E_{\unu}\brs*{N_{\psi,\action^*}(T)}}{T} + \sqrt{ \frac{1}{2} \E_{\unu}\brs*{N_{\psi,\action^*}(T)}\kl(\nu,\nu^*) }\enspace ,
\end{align*}
Substituting $\E_\unu\brs*{N_{\psi,\action^*}(T)} \le \frac{T}{\abs{\actionset}}$ and $\abs{\actionset}\ge \frac{\Narms-\Nbatch}{\Ndis}$ into the regret bound \eqref{eq:problem_independent_regret_decomp} leads to the desired result.
\end{proof}

\clearpage


\section{Technical Lemmas} \label{appendix:technical}
\klBound*
\begin{proof}
Assume that $\norm{\epsilon}_\infty\le b_{\epsilon,0}$, such that $\nu$ is a valid probability distribution. Since all arms $i\in\commset^*$ are mutually independent of all arms $i\notin\commset^*$, we can write  
\begin{align*}
    \kl(\nu,\nu^*) =\underbrace{\kl(\nu_{\commset^*},\nu^*_{\commset^*})}_{=0} + \kl(\nu_{{\commset^*}^c},\nu^*_{{\commset^*}^c}) = \kl(\nu_{{\commset^*}^c},\nu^*_{{\commset^*}^c})
\end{align*}
Next, the KL-divergence between the distributions in Table \ref{table:lower_bound_distribution} can be bounded by:
{\small
\begin{align*}
    \kl(\nu,\nu^*) 
    &= p_1\ln\frac{p_1}{p_1-\epsilon_1} + \sum_{j=2}^{\Ndis} \br*{p_j-p_{j-1}}\ln\frac{p_j-p_{j-1}}{p_j-p_{j-1}-\epsilon_j} 
    + (1-p_\Ndis)\ln\frac{1-p_\Ndis}{1-p_\Ndis +\sum_{j=1}^\Ndis \epsilon_j} \\
    & \stackrel{(*)}{\le} \frac{p_1\epsilon_1}{p_1-\epsilon_1} + \sum_{j=2}^{\Ndis} \frac{\br*{p_j-p_{j-1}}\epsilon_j}{p_j-p_{j-1}-\epsilon_j} - \frac{(1-p_\Ndis)\sum_{j=1}^\Ndis \epsilon_j}{1-p_\Ndis+\sum_{j=1}^\Ndis \epsilon_j} \\
    & = \epsilon_1 + \frac{\epsilon_1^2}{p_1-\epsilon_1} + \sum_{j=2}^{\Ndis} \br*{\epsilon_j + \frac{\epsilon_j^2}{p_j-p_{j-1}-\epsilon_j}} -\sum_{j=1}^\Ndis \epsilon_j + \frac{\br*{\sum_{j=1}^\Ndis \epsilon_j}^2}{1-p_\Ndis+\sum_{j=1}^\Ndis \epsilon_j} \\
    & = \frac{\epsilon_1^2}{p_1-\epsilon_1} + \sum_{j=2}^{\Ndis} \frac{\epsilon_j^2}{p_j-p_{j-1}-\epsilon_j} + \frac{\br*{\sum_{j=1}^\Ndis \epsilon_j}^2}{1-p_\Ndis+\sum_{j=1}^\Ndis \epsilon_j}
\end{align*}
}
where $(*)$ is due to the inequality $\ln x \le x-1$. Now let $$b_{\epsilon,1}=\frac{1}{2}\min\brc*{p_1,\min_i(p_i-p_{i-1}),\frac{1}{\Ndis}(1-p_\Ndis),b_{\epsilon,0}}\enspace, $$
and for any $\epsilon\in\R^\Ndis$ such that $\norm*{\epsilon}_\infty\le b_{\epsilon,1}$, it holds that
\begin{align*}
    \kl(\unu,\unu^*) 
    \le 2\br*{ \frac{\epsilon_1^2}{p_1} + \sum_{j=2}^{\Ndis} \frac{\epsilon_j^2}{p_j-p_{j-1}} + \frac{\br*{\sum_{j=1}^\Ndis \epsilon_j}^2}{1-p_\Ndis}}\enspace.
\end{align*}
Writing the bound in a matrix formulation yields the desired result and concludes the proof.
\end{proof}

\clearpage

\gapBound*
\begin{proof}
Define $\ab_j = \sum_{i=1}^j \ub_i$ for all $j\in\brs*{\Ndis}$ and  $\ab_j=0$ for all $j>\Ndis$, which also results with
\begin{align*}
    \ab^T\nabla\rewardvec{p} 
    = \sum_{j=1}^\Ndis\sum_{i=1}^j \ub_i\nabla_j\rewardvec{p} 
    = \sum_{j=1}^\Ndis \ub_j \sum_{i=j}^\Ndis \nabla_i \rewardvec{p} 
    = \sum_{j=1}^\Ndis \ub_j c_j
    = c^T\ub>0 \enspace.
\end{align*}
Next, by the definition of the gradient, 
\begin{align*}
    \lim_{\epsilon_0\to0}\frac{\rewardvec{p} - \rewardvec{p-\epsilon_0\ab}}{\epsilon_0} = \ab^T\nabla\rewardvec{p} \enspace.
\end{align*}
Specifically, for $\delta=c^T\ub/2>0$, there exists $b_{\epsilon,2}>0$, such that for all $0<\epsilon_0\le b_{\epsilon,2}$, 
\begin{align*}
    \rewardvec{p} - \rewardvec{p-\epsilon_0\ab} \ge \epsilon_0\br*{\ab^T\nabla\rewardvec{p} - \delta} = \frac{1}{2}\epsilon_0 c^T\ub>0 \enspace.
\end{align*}

We now show that the l.h.s of the inequality is equal to $\dr{\epsilon}$. First, Under the index invariance assumption, it holds that $\rewardvec{p}=\rewardvec{\meanvec^{\nu^*}}$. Also, note that when comparing to $\nu^*$, the mean value of base arm $i$ in distribution $\nu$ decreases by $\sum_{j=1}^i \epsilon_j = \epsilon_0\ab_i$, and therefore $\rewardvec{p-\epsilon_0\ab} = \rewardvec{\meanvec^\nu}$. Finally, recall that $\dr{\epsilon}=\rewardvec{\meanvec^{\nu^*}} - \rewardvec{\meanvec^\nu}$, and substituting into the last inequality yields the desired result.
\end{proof}


\boundExplicitCalc*
\begin{proof}
By applying the Sherman-Morrison Formula \citep{hager1989updating} on $B_{KL}(p)$, we get
\begin{align*}
    B^{-1}_{KL}(p) = \br*{D(p)+\frac{1}{1-p_\Ndis}\onevec\onevec^T}^{-1} 
    = D^{-1}(p) - \frac{1}{1-p_\Ndis}\frac{D^{-1}(p)\onevec\onevec^TD^{-1}(p)}{1+\frac{1}{1-p_\Ndis}\onevec^TD^{-1}(p)\onevec}\enspace .
\end{align*}
$D^{-1}(p)$ is a diagonal matrix whose elements are $D^{-1}_{ii}(p) = \frac{1}{D_{ii}(p)} = p_i-p_{i-1}$. The elements of the matrix $D^{-1}(p)\onevec\onevec^TD^{-1}(p)$ are 
\begin{align*}
    \br*{D^{-1}(p)\onevec\onevec^TD^{-1}(p)}_{ij} = D^{-1}(p)_{ii}D^{-1}(p)_{jj}  = \br*{p_i-p_{i-1}}\br*{p_j-p_{j-1}}
\end{align*}
and denominator can be written as 
\begin{align*}
    1+\frac{1}{1-p_\Ndis}\onevec^TD^{-1}(p)\onevec
    &= 1 + \frac{1}{1-p_\Ndis}\sum_{i=1}^{\Ndis}\frac{1}{D_{ii}(p)} \\
    &= 1 + \frac{1}{1-p_\Ndis}\sum_{i=1}^{\Ndis}\br*{p_i-p_{i-1}} \\
    &= 1 + \frac{p_\Ndis}{1-p_\Ndis}\\
    &=\frac{1}{1-p_\Ndis}
    \enspace .
\end{align*}
where we used the fact that $p_0=0$. Therefore, the elements of $B^{-1}_{KL}(p)c$ are equal to 
\begin{align*}
    \br*{B^{-1}_{KL}(p)c}_i 
    &= \br*{D^{-1}(p)c}_i - \frac{1}{1-p_\Ndis}\br*{\frac{D^{-1}(p)\onevec\onevec^TD^{-1}(p)c}{1+\frac{1}{1-p_\Ndis}\onevec^TD^{-1}(p)\onevec}}_i \\
    &= \br*{p_i-p_{i-1}}c_i - \sum_{j=1}^\Ndis\br*{p_i-p_{i-1}}\br*{p_j-p_{j-1}}c_j\\ 
    &= \br*{p_i-p_{i-1}}\br*{c_i - \sum_{j=1}^\Ndis\br*{p_j-p_{j-1}}c_j}
    \enspace .
\end{align*}
Recalling that $\epsilon^* = \epsilon_0B^{-1}_{KL}(p)c$ concludes the first part of the lemma. For the second part of the lemma, we directly substitute $\epsilon^*$:
\begin{align}
    f_\commset(\epsilon^*;p) \nonumber
   &  = c^TB^{-1}_{KL}(p)c\\
   & = \sum_{i=1}^{\Ndis}\br*{p_i-p_{i-1}}c_i\br*{c_i - \sum_{j=1}^\Ndis\br*{p_j-p_{j-1}}c_j} \nonumber\\
    & = \sum_{i=1}^{\Ndis}\br*{p_i-p_{i-1}}c_i^2 - \br*{\sum_{i=1}^{\Ndis}c_i\br*{p_i-p_{i-1}}}^2 \label{eq:smooth_positive}\\
    & \stackrel{(*)}{=} \sum_{i=1}^{\Ndis}p_i \br*{c_i^2-c_{i+1}^2} - \br*{\sum_{i=1}^{\Ndis}p_i \br*{c_i-c_{i+1}}}^2 \nonumber\enspace ,
\end{align}
where $(*)$ is due to the following identities and under the notations $p_0=c_{\Ndis+1}=0$:
\begin{align*}
    &\sum_{i=1}^\Ndis \br*{p_i-p_{i-1}}c_i = \sum_{i=1}^\Ndis p_i \br*{c_i-c_{i+1}} \\
    &\sum_{i=1}^\Ndis \br*{p_i-p_{i-1}}c_i^2 = \sum_{i=1}^\Ndis p_i \br*{c_i^2-c_{i+1}^2}  \enspace.
\end{align*} 
This term can be further simplified by substituting $c_i = \sum_{j=i}^\Ndis \nabla_j \rewardvec{p}$ as follows:

\begin{align*}
   f_\commset(\epsilon^*;p) 
    & = \sum_{i=1}^\Ndis p_i \br*{c_i^2-c_{i+1}^2} - \br*{\sum_{i=1}^\Ndis p_i \br*{c_i-c_{i+1}}}^2 \nonumber\\
    &=\sum_{i=1}^\Ndis p_i \nabla_i\rewardvec{p}^2 + 2\sum_{i=1}^\Ndis p_i \sum_{j=i+1}^\Ndis\nabla_i\rewardvec{p}\nabla_j\rewardvec{p} 
    - \sum_{i=1}^\Ndis\sum_{j=1}^\Ndis p_ip_j \nabla_i\rewardvec{p}  \nabla_j\rewardvec{p} \nonumber\\
    &=\sum_{i=1}^\Ndis p_i \nabla_i\rewardvec{p}^2 + 2\sum_{i=1}^\Ndis p_i \sum_{j=i+1}^\Ndis\nabla_i\rewardvec{p}\nabla_j\rewardvec{p} \\
    &\quad- \sum_{i=1}^\Ndis p_i^2 \nabla_i\rewardvec{p}^2  - 2\sum_{i=1}^\Ndis\sum_{j=i+1}^\Ndis p_ip_j \nabla_i\rewardvec{p}  \nabla_j\rewardvec{p}  \\
    & = \sum_{i=1}^\Ndis p_i(1-p_i)\nabla_i\rewardvec{p}^2 + 2\sum_{i=1}^\Ndis \sum_{j=i+1}^\Ndis p_i(1-p_j)\nabla_i\rewardvec{p}\nabla_j\rewardvec{p} \\
    &= \tilde{\gamma}_g^2(\meanvec;\commset)\enspace . 
\end{align*}

We remark that by Equation \eqref{eq:smooth_positive} and under the notations $p_0=0$, we observe that
\begin{align*}
    \tilde{\gamma}_g^2(\meanvec;\commset) = \sum_{i=1}^{\Ndis}\br*{p_i-p_{i-1}}c_i^2 - \br*{\sum_{i=1}^{\Ndis}c_i\br*{p_i-p_{i-1}}}^2 \enspace.
\end{align*}
Denoting $p_{\Ndis+1}=1$ and $c_{\Ndis+1}=0$, the modified Gini-smoothness can be perceived as the variance of a random variable $X$ such that for all $i\in\brs*{\Ndis+1}$, $\Pr\brc*{X=c_i}=p_i-p_{i-1}$. This also holds when $p_1=0$ and $p_\Ndis=1$, or when $p_i=p_{i+1}$ for some $i\in\brs*{\Ndis-1}$. Thus, we can conclude that for any $\meanvec\in\brs*{0,1}^\Nbatch$ and any $\commset\subset\brs*{\Nbatch}$, it holds that $\tilde{\gamma}_g^2(\meanvec;\commset)\ge0$.
\end{proof}

\clearpage

\section{Proof of Problem Independent Lower Bound}
\label{appendix:problem independent proof}
\independentLowerBound*
\begin{proof}
Without loss of generality, assume that $IB^*_r(\meanvec)>0$, since otherwise the bound trivially holds. As in the proof of Theorem \ref{theorem:dependent_lower_bound}, we denote $p=p^{\meanvec,\commset^*}$, where $\commset^*$ is the maximizer in $IB^*_r(\meanvec)$. We also similarly assume that $0\!<\!p_1\!<\!\dots\!<\!p_\Ndis\!<\!1$ and fix the distributions $\nu$ and $\nu^*$ according to Table \ref{table:lower_bound_distribution}. Finally, we set $\epsilon^*=\epsilon_0B_{KL}^{-1}c$, with $B_{KL}(p)$ and $c$ as in Lemmas \ref{lemma:kl-bound} and \ref{lemma:gap-bound}. Recall that with $\epsilon_0$ small enough, both Lemmas  \ref{lemma:kl-bound} and \ref{lemma:gap-bound} hold. Combining both inequalities, we get
{\small
\begin{align*}
    \kl(\nu,\nu^*)
    = \dr{}^2 \frac{\kl(\nu,\nu^*)}{\dr{}^2} \le \dr{}^2 \frac{2{\epsilon^*}^T B_{KL}(p)\epsilon^*}{\br*{\frac{1}{2}c^T\epsilon^*}^2} 
    = \frac{8\dr{}^2}{f_\commset(\epsilon^*;p)} \enspace,
\end{align*}
}
where $f_\commset(\epsilon^*;p)$ is defined in Lemma \ref{lemma:boundExplicitCalc}. Specifically for our choice of $\epsilon^*$, Lemma \ref{lemma:boundExplicitCalc} also implies that $f_\commset(\epsilon^*;p)=\tilde{\gamma}_g^2(\meanvec;\commset)$. By Lemma \ref{lemma:generalIndependentBound}, there exists an instance of an $I$-disjoint CMAB problem with gap $\dr{}$ and optimal action $\E\brs*{\nu'_{\action^*}}=\meanvec$ such that 
{\small
\begin{align*}
    \E_{\unu'}\brs*{R(T)}
    &\ge T\dr{}\br*{1-\frac{\Ndis}{\Narms-\Nbatch} -\sqrt{\frac{1}{2}\frac{T\Ndis}{\Narms-\Nbatch}\kl(\nu,\nu^*)}} \\
    & \ge T\dr{}\br*{1-\frac{\Ndis}{\Narms-\Nbatch} -2\dr{}\sqrt{\frac{T\Ndis}{(\Narms-\Nbatch)\tilde{\gamma}_g^2(\meanvec;\commset)}}} 
\end{align*}
}
Next, recall that there exists $\dr{0}$ such that we can achieve any gap $0<\dr{}\le\dr{0}$ by tuning $\epsilon_0$. Therefore, for large enough $T$, there exists $\epsilon_0$ such that $\dr{} = \frac{\tilde{\gamma}_g(\meanvec;\commset)}{8}\sqrt{\frac{\Narms-\Nbatch}{T\Ndis}}$, for which we get the bound 
{\small
\begin{align*}
    \E_{\unu'}\brs*{R(T)}
    \ge \frac{\tilde{\gamma}_g(\meanvec;\commset)}{8}\sqrt{\frac{\Narms-\Nbatch}{T\Ndis}}\br*{1-\frac{1}{2} -\frac{1}{4}} 
    = \frac{\tilde{\gamma}_g(\meanvec;\commset)}{32}\sqrt{\frac{T(\Narms-\Nbatch)}{\Ndis}} \enspace.
\end{align*}
}
where in the first inequality we use $\Narms\ge3\Nbatch$. For brevity, we omitted the cases where $p_1\!=\!0$, $p_\Ndis\!=\!1$ or $p_i\!=\!p_{i+1}$ for some $i\!\in\!\brs*{\Ndis}$, and refer the readers to the proof of Theorem \ref{theorem:dependent_lower_bound} for the required modifications.
\end{proof}

\clearpage


 \section{Relations Between the Smoothness Measures}  \label{appendix:relations}

\smoothnessRelationModified*
\begin{proof}
Without loss of generality, assume that $p_1>0$ and $p_\Ndis<1$. Otherwise, the r.h.s. is zero, and since $\tilde{\gamma}_g(\meanvec;\commset)\ge0$, the result trivially holds (further details on the nonnegativity of $\tilde{\gamma}_g(\meanvec;\commset)$ can be found at the end of the proof of Lemma \ref{lemma:boundExplicitCalc}). 
Similarly to Theorem \ref{theorem:dependent_lower_bound}, also assume that $p_1<\dots<p_\Ndis$ and define $B_{KL},c$ and $f_I(\epsilon;p)$ as in Lemmas \ref{lemma:kl-bound}, \ref{lemma:gap-bound} and \ref{lemma:boundExplicitCalc}. As in Theorem \ref{theorem:dependent_lower_bound}, all results can be modified to the case where different arms have the same mean, by defining $\epsilon$ only on indices where $p_i<p_{i+1}$, or equivalently forcing $\epsilon_i=0$ if $p_i=p_{i+1}$. However, we avoid this case for brevity. Finally, we assume that $\Ndis>0$ and $\nabla_i\rewardvec{p}\ne0$ for some $i\in\brs*{\Ndis}$, since otherwise, both sides of the inequality equal zero, and the bound trivially holds.

Next, we denote $p_0=0$ and set $\epsilon_j = \epsilon_0\sqrt{p_j(1-p_j)} - \epsilon_0\sqrt{p_{j-1}(1-p_{j-1})}$ for all $j\in\brs*{\Ndis}$. By construction $\sum_{i=1}^j \epsilon_i = \epsilon_0\sqrt{p_j(1-p_j)}$, and direct substitution yields 
\begin{align}
    \label{eq:gap_specific_epsilon}
    c^T\epsilon 
    = \sum_{j=1}^{\Ndis}\sum_{i=j}^\Ndis\epsilon_j\nabla_i\rewardvec{p} 
    =\sum_{j=1}^{\Ndis}\sum_{i=1}^j\epsilon_i\nabla_j\rewardvec{p} 
    = \epsilon_0\sum_{j=1}^{\Ndis}\sqrt{p_j(1-p_j)}\nabla_j\rewardvec{p}\enspace,
\end{align}
and
\begin{align}
\label{eq:kl_specific_epsilon}
    \epsilon^TB_{KL}(p) \epsilon 
    &= \epsilon_0^2\br*{\sum_{j=1}^{\Ndis} \frac{\br*{\sqrt{p_j(1-p_j)} - \sqrt{p_{j-1}(1-p_{j-1})}}^2}{p_j-p_{j-1}} + p_\Ndis} \nonumber \\
    &= \epsilon_0^2\br*{1-p_1+p_\Ndis+\sum_{j=2}^{\Ndis} \frac{\br*{\sqrt{p_j(1-p_j)} - \sqrt{p_{j-1}(1-p_{j-1})}}^2}{p_j-p_{j-1}}}\enspace.
\end{align}
Before we further bound this term, note that for any $0\le x\le y \le \frac{1}{2}$, it holds that 
\begin{align*}
    \br*{\sqrt{y(1-y)}-\sqrt{x(1-x)}}^2 
    &\le \br*{\sqrt{y(1-y)}-\sqrt{x(1-y)}}^2 
    = (1-y)\br*{\sqrt{y}-\sqrt{x}}^2 \\
    &\le \br*{\sqrt{y}-\sqrt{x}}^2 \enspace,
\end{align*}
where the first inequality uses the fact that $y(1-y)\ge x(1-x)\ge x(1-y)$. From symmetry, it also holds for any $x,y\in[0,\frac{1}{2}]$. If $x\in[0,\frac{1}{2}]$ and $y\in[\frac{1}{2},1]$, we bound
\begin{align*}
    \br*{\sqrt{y(1-y)}-\sqrt{x(1-x)}}^2 
    &= y(1-y)+x(1-x) - 2\sqrt{y(1-y)x(1-x)}\\
    &\le \max\brc*{y(1-y),x(1-x)} - \min\brc*{y(1-y),x(1-x)}  \\
    &= \abs{y(1-y)-x(1-x)} \\ 
    &= (y-x)\abs*{1-x-y} \\
    & \le (y-x)
\end{align*}
If $x,y\in[\frac{1}{2},1]$, from symmetry $\br*{\sqrt{y(1-y)}-\sqrt{x(1-x)}}^2 \le  \br*{\sqrt{1-y}-\sqrt{1-x}}^2$. Recall that $p_j$ are strictly increasing, and denote the last index in which $p_j\le\frac{1}{2}$ by $n$. Applying these inequalities on \eqref{eq:kl_specific_epsilon}, we get 
{\small
\begin{align}
    &\epsilon^TB_{KL}(p) \epsilon \nonumber\\
    &\qquad\le \epsilon_0^2\br*{1-p_1 + p_\Ndis + \sum_{j=2}^{n} \frac{\br*{\sqrt{p_j}-\sqrt{p_{j-1}}}^2 }{p_j-p_{j-1}} + \frac{p_{n+1}-p_n}{p_{n+1}-p_n} + \sum_{j=n+2}^{\Ndis} \frac{\br*{\sqrt{1-p_j}-\sqrt{1-p_{j-1}}}^2 }{p_j-p_{j-1}}} \nonumber\\
    &\qquad\le \epsilon_0^2\br*{3 + \sum_{j=2}^{n} \frac{\br*{\sqrt{p_j}-\sqrt{p_{j-1}}}^2 }{p_j-p_{j-1}} + \sum_{j=n+2}^{\Ndis} \frac{\br*{\sqrt{1-p_j}-\sqrt{1-p_{j-1}}}^2 }{p_j-p_{j-1}}}\enspace. \label{eq:kl_specific_epsilon2}
\end{align}
}
Next, we bound the summands as follows:
\begin{align*}
    \frac{\br*{\sqrt{p_j}-\sqrt{p_{j-1}}}^2 }{p_j-p_{j-1}}
    &= \frac{\br*{\sqrt{p_j}-\sqrt{p_{j-1}}}^2 }{p_j-p_{j-1}} \cdot \frac{\br*{\sqrt{p_j}+\sqrt{p_{j-1}}}^2 }{\br*{\sqrt{p_j}+\sqrt{p_{j-1}}}^2}
    = \frac{\br*{p_j-p_{j-1}}^2 }{(p_j-p_{j-1})\br*{\sqrt{p_j}+\sqrt{p_{j-1}}}^2} \\
    &\le \frac{p_j-p_{j-1} }{p_j}\enspace.
\end{align*}
Similarly, we have
\begin{align*}
    \frac{\br*{\sqrt{1-p_j}-\sqrt{1-p_{j-1}}}^2 }{p_j-p_{j-1}}
    &= \frac{\br*{\sqrt{1-p_j}-\sqrt{1-p_{j-1}}}^2 }{p_j-p_{j-1}} \cdot \frac{\br*{\sqrt{1-p_j}+\sqrt{1-p_{j-1}}}^2 }{\br*{\sqrt{1-p_j}+\sqrt{1-p_{j-1}}}^2} \\
    &= \frac{\br*{p_j-p_{j-1}}^2 }{(p_j-p_{j-1})\br*{\sqrt{1-p_j}+\sqrt{1-p_{j-1}}}^2} \\
    &\le \frac{p_j-p_{j-1} }{1-p_{j-1}}\enspace.
\end{align*}
Substitute both into \eqref{eq:kl_specific_epsilon2} yields:
\begin{align*}
    \epsilon^TB_{KL}(p) \epsilon
    &\le  \epsilon_0^2\br*{ 3 + \sum_{j=2}^{n} \frac{p_j-p_{j-1} }{p_j} + \sum_{j=n+2}^{\Ndis} \frac{p_j-p_{j-1} }{1-p_{j-1}} } \\
    &\stackrel{(*)}{\le} \epsilon_0^2\br*{ 3 + \int_{p_1}^{p_n} \frac{dx}{x}+  \int_{p_{n+1}}^{p_\Ndis} \frac{dx}{1-x}} \\
    &\le \epsilon_0^2\br*{ 3 + \ln\frac{1}{p_1} + \ln\frac{1}{1-p_\Ndis}} \enspace,
\end{align*}
where $(*)$ utilizes the relation between sums and integrals. Combining with \eqref{eq:gap_specific_epsilon} and substituting back into $f_\commset(\epsilon;p)$, we get
\begin{align*}
    f_\commset(\epsilon;p) 
    &\ge \frac{\br*{\epsilon_0\sum_{j=1}^{\Ndis}\sqrt{p_j(1-p_j)}\nabla_j\rewardvec{p}}^2}{\epsilon_0^2\br*{ 3 + \ln\frac{1}{p_1} + \ln\frac{1}{1-p_\Ndis}}} 
    = \frac{\gamma_{g,1}^2(\meanvec;\commset)}{3 + \ln\frac{1}{p_1} + \ln\frac{1}{1-p_\Ndis}}
\end{align*}

The proof is concluded by applying Lemma \ref{lemma:boundExplicitCalc} and recalling that $\epsilon^*$ is the maximizer of $f_\commset(\epsilon;p)$; therefore, for any $\epsilon\ne0$, it holds that $f_\commset(\epsilon;p)\le f_\commset(\epsilon^*;p)\le\tilde{\gamma}_g^2(\meanvec;\commset)$.
\end{proof}

\normsRelation*
\begin{proof}
Without loss of generality, assume that $x_1\ge x_2\ge\dots\ge x_n\ge 0$, as reorganizing and taking absolute values do not affect both sides of the inequality. We bound $\norm{x}_2^2$ as follows:
\begin{align*}
    \norm{x}_2^2 
    &= \sum_{l=1}^n x_l^2 
    \stackrel{(1)}{\le} \sum_{l=1}^n \br*{\frac{1}{l}\sum_{k=1}^l x_k }^2
    = \sum_{l=1}^n \frac{1}{l}\brs*{\frac{1}{l}\br*{\sum_{k=1}^l x_k }^2} \\
    &\stackrel{(2)}{\le} \sum_{l=1}^n \frac{1}{l}\max_{A\ne\emptyset} \frac{1}{\abs*{A}} \norm*{x_A}_1^2
    \stackrel{(3)}{\le} (1+\ln n)\max_{A\ne\emptyset} \frac{1}{\abs*{A}} \norm*{x_A}_1^2
\end{align*}
In $(1)$ we use the fact that $x_i$ are decreasing and non-negative, and thus $x_i\le\frac{1}{i}\sum_{k=1}^i x_k$. For $(2)$, we note that $\frac{1}{l}\br*{\sum_{k=1}^l x_k }^2=\frac{1}{\abs*{A}}\norm*{x_A}_1^2$ for $A=\brc*{1,\dots,l}$, and bound it by the maximum over all possible subsets $A\ne\emptyset$. Finally, $(3)$ uses the well-known property of the harmonic sum $\sum_{l=1}^n \frac{1}{l}\le 1 + \ln n$.
\end{proof}

\clearpage
\smoothnessRelationEuclid*

\begin{proof}
For the first inequality, note that when the function is monotone, all elements in the summation of $\gamma_{g,1}(\meanvec;\commset)$ are nonnegative, and it can be thus conceived as the L1 norm of a vector whose components are $\sqrt{\meanval_i(i-\meanval_i)}\nabla_i\rewardvec{\meanvec}$. Furthermore, $\gamma_{g,2}(\meanvec;\commset)$ can be conceived as the L2 norm of the same vector. Specifically, both are the respective norm of a vector with a subset of these components. Therefore, we can directly relate these two quantities using standard relations between norms. Nonetheless, we are interested in maximizing the lower bound, which includes an additional `penalty' factor on the number of components in the vector $1/\Ndis$. As a result, choosing the largest number of elements in the sub-vector is not always optimal. Specifically, we show in Lemma \ref{lemma:normsRelation} that when optimizing the choice of $\commset$, the penalized $L1$ norm of the sub-vector can be lower bounded by the $L2$ norm of the \emph{full} vector, up to logarithmic factors. 
Applying this lemma on $\gamma_{g,1}(\meanvec;\commset)$ and $\gamma_{g,2}(\meanvec;\emptyset)$ leads to the first inequality of the proposition.

Next, we prove the second inequality. By Proposition \ref{prop:smoothness-relation-modified-L1}, for any set $\commset$, if $p=p^{\meanval,\commset}$, it holds that 
\begin{align*}
     \tilde{\gamma}_g^2(\meanvec;\commset)\ge\frac{\gamma_{g,1}^2(\meanvec;\commset)}{3 + \ln\frac{1}{p_1} + \ln\frac{1}{1-p_\Ndis}}\enspace .
\end{align*}
next, we divide by $\Ndis$ and maximize over $\commset$, which yields
\begin{align*}
     \max_{\commset}\frac{\tilde{\gamma}_g^2(\meanvec;\commset)}{\Ndis}
     \ge
     \max_{\commset}\frac{\gamma_{g,1}^2(\meanvec;\commset)}{\Ndis\br*{3 + \ln\frac{1}{p_1} + \ln\frac{1}{1-p_\Ndis}}}\enspace .
\end{align*}
If the maximizer on the r.h.s. leads to $p_1=0$ or $p_\Ndis=1$, then the r.h.s. of the inequality equals zero for any $\commset\subset\brs*{\Nbatch}$. We can then choose $\commset={i}$ for all $i\in\brs*{\Nbatch}$, and thus $\meanval_i(1-\meanval_i)\nabla_i\rewardvec{\meanvec}^2=0$ for all $i\in\brs*{\Nbatch}$. Therefore, it also holds that $\gamma_{g,2}^2(\meanvec;\emptyset)=0$ and the required inequality trivially holds. Otherwise, $p_1>0$ and $p_\Ndis<1$, and specifically, $p_1>\meanval_{\min}$ and $p_\Ndis<\meanval_{\max}$. Then, it also holds that 
\begin{align*}
     \max_{\commset}\frac{\tilde{\gamma}_g^2(\meanvec;\commset)}{\Ndis}
     \ge
     \max_{\commset}\frac{\gamma_{g,1}^2(\meanvec;\commset)}{\Ndis\br*{3 + \ln\frac{1}{\meanval_{\min}} + \ln\frac{1}{1-\meanval_{\max}}}}\enspace ,
\end{align*}
and applying the first result of the proposition leads to its second result.

\end{proof}

\clearpage


\section{Tightness of the Relations Between the Smoothness Measures}
\label{appendix:tightness}
\subsection{Modified Smoothness and L1 Smoothness}
\label{appendix:tightness modified L1}
In this appendix, we demonstrate the tightness of Proposition \ref{prop:smoothness-relation-modified-L1}. Specifically, we show that there exists a CMAB instance and exponentially small arm parameters $\meanvec$ such that for all $\commset\subset \brs*{\Nbatch}$, it holds that 
$\tilde{\gamma}_g^2(\meanvec;\commset) = \Ocal\br*{\frac{\gamma_{g,1}^2(\meanvec;\commset)}{\Ndis}}$. 
This proves that the logarithmic factor in Inequality \eqref{eq:smoothness-relation-modified-L1} cannot be replaced with a better factor of $\Omega\br*{\frac{1}{\ln \Nbatch}}$.

We start by fixing $\meanval_i=2^{-2(\Nbatch-i)-1}$ and choosing the CMAB instance such that $\nabla_i\rewardvec{\meanvec}=2^{\Nbatch-i}$. Notice that the elements of $\meanvec$ are sorted in an increasing order; therefore, for any $\commset$, the vector $p^{\meanvec,\commset}$ contains all the elements of $\meanvec$ outside $\commset$ in their original order. Thus, we can directly bound $\tilde{\gamma}_g^2(\meanvec;\commset)$ by:
\begin{align*}
     \tilde{\gamma}_g^2(\meanvec;\commset)
     &= \sum_{i\notin\commset}\meanval_i(1-\meanval_i)\nabla_i\rewardvec{\meanvec}^2 + 2\sum_{i\notin\commset}\sum_{j>i,j\notin\commset}\meanval_i(1-\meanval_j)\nabla_i\rewardvec{\meanvec}\nabla_j\rewardvec{\meanvec} \\
     & \stackrel{(1)}{\le} \sum_{i\notin\commset}\meanval_i\nabla_i\rewardvec{\meanvec}^2 + 2\sum_{i\notin\commset}\sum_{j>i,j\notin\commset}\meanval_i\nabla_i\rewardvec{\meanvec}\nabla_j\rewardvec{\meanvec} \\
     & \stackrel{(2)}{\le} 2\sum_{i\notin\commset}\meanval_i\nabla_i\rewardvec{\meanvec}\sum_{j\ge i,j\notin\commset}\nabla_j\rewardvec{\meanvec} \\
     & \stackrel{(3)}{\le} 2\sum_{i\notin\commset}2^{-2(\Nbatch-i)-1}2^{\Nbatch-i}\sum_{j\ge i}2^{\Nbatch-j} \\
     & = \sum_{i\notin\commset}2^{-(\Nbatch-i)}\sum_{j\ge i}2^{\Nbatch-j} \\
     & \le \sum_{i\notin\commset}2^{-(\Nbatch-i)}2^{\Nbatch-i+1}  \\
     &= 2(\Nbatch-\abs*{\commset})=2\Ndis
\end{align*}
In $(1)$ we removed the terms $1-\meanval_i,1-\meanval_j$, which increases the expression since they are smaller then $1$ and the gradients are positive. Similarly, in $(2)$ we multiplied the first term by $2$ and combined the sums into a single term. In $(3)$ we substituted the values of the parameters and increased the internal sum by summing over all $j\ge i$ (including elements in $\commset$). Next, we lower bound $\gamma_{g,1}^2(\meanvec;\commset)$:
\begin{align*}
     \gamma_{g,1}^2(\meanvec;\commset)
     &= \br*{\sum_{i\notin\commset}\sqrt{\meanval_i(1-\meanval_i)}\nabla_i\rewardvec{\meanvec}}^2 
     \stackrel{(1)}{\ge} \frac{1}{2}\br*{\sum_{i\notin\commset}\sqrt{\meanval_i}\nabla_i\rewardvec{\meanvec}}^2 \\
     &= \frac{1}{2}\br*{\sum_{i\notin\commset}2^{-(\Nbatch-i)-\frac{1}{2}}2^{\Nbatch-i}}^2
     =\frac{1}{4}\br*{\sum_{i\notin\commset}1}^2 
     = \frac{\Ndis^2}{4}
\end{align*}
In $(1)$ we used the fact that for all $i$, $1-\meanval_i\ge\frac{1}{2}$. 
Using both bounds, we conclude that for all $\commset\subset\brs*{\Nbatch}$, it holds that
\begin{align*}
     \frac{\tilde{\gamma}_g^2(\meanvec;\commset)}{\gamma_{g,1}^2(\meanvec;\commset)}
     \le \frac{2\Ndis}{\frac{1}{4}\Ndis^2}=\Ocal\br*{\frac{1}{\Ndis}}\enspace .
\end{align*}
Another conclusion from this example is that 
\begin{align*}
     \frac{\max_\commset\tilde{\gamma}_g^2(\meanvec;\commset)}{\max_\commset\gamma_{g,1}^2(\meanvec;\commset)}
     \le \frac{2\Nbatch}{\frac{1}{4}\Nbatch^2}=\Ocal\br*{\frac{1}{\Nbatch}}\enspace .
\end{align*}
Thus, in contrast to the relations between the L1 and L2 Gini-smoothness measure, we cannot improve the inequality by maximizing over the set $\commset$. We end this section by remarking that in this example, one can easily observe that $p_1\le 2^{-2\Ndis+1}$, and thus $\ln\frac{1}{p_1}\approx\Ndis$. Therefore, for this instance, Proposition \ref{prop:smoothness-relation-modified-L1} is tight.

\subsection{L1 Smoothness and L2 Smoothness}
\label{appendix:tightness L1 L2}
In this appendix we prove the tightness of the relation between the L1 and L2 smoothness measures. We do so by proving that Lemma \ref{lemma:normsRelation} is tight up to a constant factor. Let $x\in\R^n$ such that $x_i=\sqrt{i}-\sqrt{i-1}$. Specifically, $x$ is positive and sorted in a decreasing order, and therefore for any $d\in\brs*{n}$ and any $A$ such that $\abs*{A}=d$, the maximal value of  $\frac{\norm{x_A}_1^2}{\abs*{A}}$ is obtained for $A=\brs*{d}$. Moreover, for this specific set, it also holds that
\begin{align*}
     \frac{1}{\abs*{A}}\norm{x_A}_1^2 
     = \frac{1}{d}\br*{\sum_{i=1}^d x_i}^2 
     =  \frac{1}{d}\br*{\sum_{i=1}^d \sqrt{i}-\sqrt{i-1}}^2
     = \frac{1}{d}\br*{\sqrt{d}}^2
     = 1 \enspace.
\end{align*}
Finally, we can write the l.h.s. of the norm inequality by 
\begin{align*}
     \max_A\frac{1}{\abs*{A}}\norm{x_A}_1^2 
     = \max_d\max_{\abs*{A}=d}\frac{1}{\abs*{A}}\norm{x_A}_1^2 
     = \max_d 1
     = 1 \enspace.
\end{align*}
Next, we bound the r.h.s. of the inequality by 
\begin{align*}
     \norm{x_A}_2^2 
     & = \sum_{i=1}^n \br*{\sqrt{i}-\sqrt{i-1}}^2 \\
     & = \sum_{i=1}^n \frac{\br*{\sqrt{i}-\sqrt{i-1}}^2\br*{\sqrt{i}+\sqrt{i-1}}^2}{\br*{\sqrt{i}+\sqrt{i-1}}^2} \\
     & = \sum_{i=1}^n \frac{1}{\br*{\sqrt{i}+\sqrt{i-1}}^2} \\
     & \ge \sum_{i=1}^n \frac{1}{4i} \\
     & \ge \frac{\ln(n+1)}{4}
\end{align*}
Thus, for this example, $\max_A\frac{1}{\abs*{A}}\norm{x_A}_1^2 \le \frac{4}{\ln(n+1)}\norm{x_A}_2^2$, and Lemma \ref{lemma:normsRelation} is tight up to a constant factor.

\end{document}